\newcommand{\rootpath}{./}
\begin{document}
%

\title{A Generalizable Model-and-Data Driven Approach for Open-Set RFF Authentication}
%
%
%

\author{Renjie Xie,~\IEEEmembership{Student Member,~IEEE},
        Wei Xu,~\IEEEmembership{Senior Member,~IEEE},
        Yanzhi Chen,
        Jiabao Yu, \\
        Aiqun Hu,~\IEEEmembership{Senior Member,~IEEE},
        Derrick Wing Kwan Ng,~\IEEEmembership{Fellow,~IEEE},\\ and
        A. Lee Swindlehurst,~\IEEEmembership{Fellow,~IEEE}
        %

\thanks{R. Xie, W. Xu, J. Yu, and A. Hu are with the National Mobile Communications Research Laboratory, Southeast University, Nanjing 210096, China (e-mail: renjie\_xie@seu.edu.cn, wxu@seu.edu.cn, yujiabao@seu.edu.cn, aqhu@seu.edu.cn).\emph{(Corresponding author: Wei Xu)}}%
\thanks{Y. Chen is with School of Informatics, The University of Edinburgh, EH8 9AB, United Kingdom (e-mail: s1788325@inf.ed.ac.uk).}%
\thanks{D. W. K. Ng is with the School of Electrical Engineering and Telecommunications, University of New South Wales, Sydney, NSW 2052, Australia (e-mail: w.k.ng@unsw.edu.au).}%
\thanks{A. L. Swindlehurst is with the Center for Pervasive Communications and Computing, University of California, Irvine, CA 92697-2625 USA (e-mail: swindle@uci.edu).}%
}%

\maketitle

\begin{abstract}
Radio-frequency fingerprints~(RFFs) are promising solutions for realizing low-cost physical layer authentication. 
Machine learning-based methods have been proposed for RFF extraction and discrimination. However, most existing methods are designed for the closed-set scenario where the set of devices is remains unchanged. These methods can not be generalized to the RFF discrimination of unknown devices. 
To enable the discrimination of RFF from both known and unknown devices, we propose a new end-to-end deep learning framework for extracting RFFs from raw received signals. The proposed framework comprises a novel preprocessing module, called neural synchronization~(NS), which incorporates the data-driven learning with signal processing priors as an inductive bias from communication-model based processing.
Compared to traditional carrier synchronization techniques, which are static, this module estimates offsets by two learnable deep neural networks jointly trained by the RFF extractor.  Additionally, a hypersphere representation is proposed to further improve the discrimination of RFF. Theoretical analysis shows that such a data-and-model framework can better optimize the mutual information between device identity and the RFF, which naturally leads to better performance. 
Experimental results verify that the proposed RFF significantly outperforms purely data-driven DNN-design and existing handcrafted RFF methods in terms of both discrimination and network generalizability. 
\end{abstract}

\begin{IEEEkeywords}
Physical layer authentication, radio frequency fingerprint~(RFF), deep learning, open set, representation learning, hypersphere representation.
\end{IEEEkeywords}

%
\IEEEpeerreviewmaketitle

\section{Introduction}

\IEEEPARstart{T}{he} widespread use of wireless devices has raised the issue of massive device authentication in wireless communication.
Conventional authentication based on cryptographic techniques~\cite{chen2016fully,iwamoto2017security} has significant difficulty in detecting compromised keys. Moreover, as the number of devices increases, key-based authentication at the higher layers suffers from excessive latency caused by heavy computation in the key management procedures~\cite{fang2018learning}.
To this end, physical layer authentication (PLA) has become an alternative solution for fast and efficient authentication of a large number of connected wireless devices.
Compared with conventional higher layer authentication schemes, PLA exploits inherent physical layer properties of the device hardware~\cite{danev2012physical} and enables authentication with low latency, low power consumption, and low computational overhead~\cite{hou2014physical}. As a result, PLA has attracted considerable attentions in the past few years~ \cite{peng2019deep,youssef2018machine,peng2018design,yu2019multi,patel2014improving,nguyen2011device,robyns2017physical,vo2016fingerprinting,merchant2018deep,huang2016specific,yu2019robust}.

Radio-frequency fingerprints~(RFFs) play an essential role in enabling PLA for device classification and authentication. In general, RFF refers to a set of physical layer features that are sufficient for uniquely identifying a wireless device. The quality of the RFFs crucially determines the reliability of PLA. Similar to human biometrics, these features are difficult to modify or tamper with~\cite{wang2016wireless}. 

Historically, RFF methods extracted physical layer features from the on/off transients of a radio signal received from a device. Methods of this kind date back to the work of Toonstra and Kinsner in 1996, where they distinguished seven VHF FM transmitters from four different manufacturers using wavelet analysis~\cite{toonstra1996radio}. \bflag{Later, other on/off transient features were also introduced for RFF, including phase offsets~\cite{knox2012practical}, amplitude, power, and DWT coefficients~\cite{hall2006detection,hall2004enhancing,hall2005radio}.}

\bflag{The above transient-based RFFs are sensitive to the position of the devices, the propagation environment, and the precision of the receivers~\cite{peng2018design}. 
To overcome these drawbacks, modulation-based methods were proposed to extract more stable features from a received signal~(e.g., the preamble)}. A representative work in~\cite{brik2008wireless} proposed to exploit a union of the synchronization correlation, in-phase/quadrature (I/Q) offset, phase offset, and magnitude of the received signal for RFF. Subsequent works further made use of the automatic gain control~(AGC) response~\cite{knox2010agc}, amplifier nonlinearity~\cite{huang2016specific}, sampling frequency offset~\cite{vo2016fingerprinting}, and carrier frequency offset~\cite{nguyen2011device,hou2014physical,robyns2017physical}, all of  which introduce various trade-offs between system complexity and authentication performance. The different proposed methodologies are "handcrafted", because they extract the RFF according to expert knowledge. Although they can be used in many situations, handcrafted RFF methods suffer from their inability to be generalized, and thus they are not suitable for general device classification.

\begin{figure*}[t]
	\centering
	\includegraphics[width=0.95\linewidth]{\rootpath/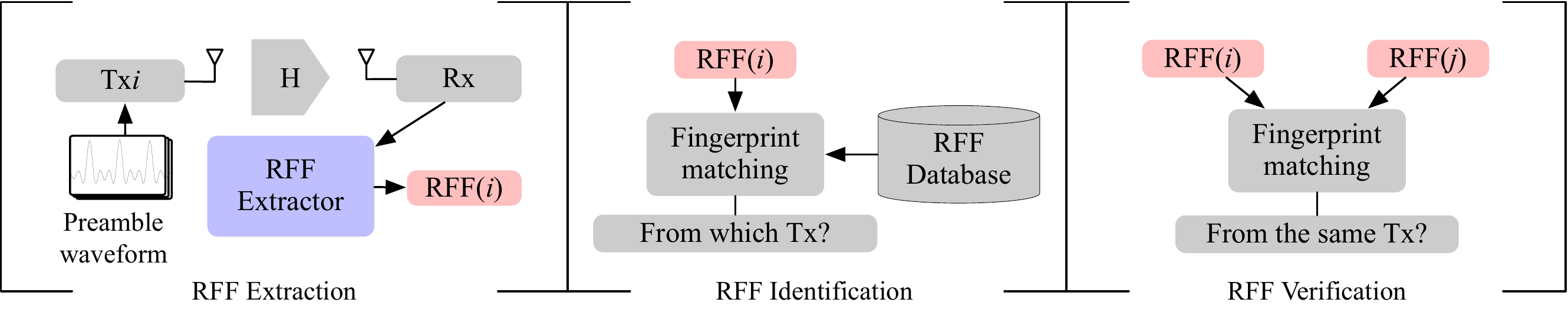}
	\caption{Open-set RFF-based physical layer authentication.}
	\label{fig:diagram_rff}
\end{figure*}

Machine learning techniques such as decision trees~\cite{patel2014improving}, linear classifiers~\cite{peng2018design}, $k$-nearest neighbor~($k$-NN)~\cite{tian2019new}, support vector machines~(SVM)~\cite{robyns2017physical,youssef2018machine}, etc., have been applied to the problem of RFF authentication. The effectiveness of these machine learning~(ML)-based RFF classifiers relies heavily on the quality of the extracted RFFs. In particular, if the extracted RFF features are not well-separable for different devices (e.g., if the relationship between the extracted RFF and the device identity is highly nonlinear), these traditional ML models (or ``shallow model'' in the taxonomy of the machine learning community) are usually incapable of correctly classifying devices using RFFs. Moreover, these traditional ML-based methods perform poorly in many real-world environments, as hardware imperfections contain nonlinear features that handcrafted RFF methods cannot easily model.

To overcome the limitations of the shallow models, deep neural networks~(DNN) have been employed in RFF authentication for better performance. In principle, DNNs are by design flexible models for representing nonlinear functions. They can either be seen as a more powerful classifier, or as a feature extraction framework that automatically learns high-level separable features~\cite{goodfellow2016deep} for simple classifiers. As a result, DNNs have achieved enormous success in a wide range of domains~\cite{krizhevsky2017imagenet, devlin2018bert, silver2017mastering, chen2020neural,rezende2015variational,gu2015neural, lu2020reinforcement}. For the field of RFF, the major advantage of adopting DNNs is their ability to exploit a wider family of RFFs, even if they are not directly separable for different devices. These non-directly separable features can potentially contain more information about the device identity. Under this assumption, the pioneering work in~\cite{merchant2018deep} extracted RFFs using a low-pass filter after applying synchronization on the raw signals, which facilitated the application of a convolutional neural network~(CNN) to extract important features. Thanks to the power of CNNs, a significant performance improvement over shallow model-based methods was observed. In~\cite{yu2019robust}, the authors further considered applying multiple sampling rates to the received signal for RFF extraction. Specifically, a CNN was trained to classify these sophisticated RFFs, and it effectively identified devices from highly non-separable RFFs.

It should be noted that despite the promising improvement, the performance of the above DL-based RFF methods still crucially depends on the quality of the original RFFs. However, typical preprocessing methods such as synchronization for extracting RFFs were designed for the general task of \emph{communication} rather than the desired task of discriminative \emph{RFF extraction}. These preprocessing procedures can weaken or even discard important information about the device identity. This loss of information can reduce the generalizability of the extracted RFF to the case of open set identification of unknown devices~\cite{geng2020recent}. This explains why most of the existing methods can only treat PLA as a closed-set identification problem where the set of devices remains static. It should be noted that the open-set setting is more realistic in real-world applications where the set of devices in the system often varies with time.
Directly feeding the raw signals to a neural network may not be viable as the trained network would presumably overfit the data. To preserve enough information about the device identifies while maintaining a balanced generalizability, a new preprocessing technique tailored for RFF extraction is needed.
To this end, this paper proposes a new RFF extraction framework that contains a special purpose preprocessing module for PLA. The main contributions of this paper are summarized as follows.

1) Methodologically, we propose an end-to-end RFF extraction framework that combines signal processing priors and deep learning for open-set RFF authentication. This combination is realized by a novel preprocessing module called neural synchronization~(NS), which generalizes traditional carrier synchronization (TS) with deep neural networks. Additionally, a hyperspherical representation is proposed to encourage large distances between devices in terms of a cosine distance metric of their RFFs. The resultant RFF can be directly used to distinguish either known or unknown devices, or even for outlier detection and new device discovery.

2) Theoretically, we prove that the learning process of the proposed RFF extraction framework is equivalent to optimizing a lower bound on the mutual information between the device identity and the RFF. In comparison, traditional handcrafted techniques designed for communication are unable to make such a claim to optimality. This observation highlights the necessity of the proposed learning framework in AI-assisted RFF authentication systems.

3) Experimentally, we demonstrate that the proposed framework outperforms state-of-the-art methods in terms of both robustness and accuracy for either closed-set or open-set RFF authentication tasks. This performance gain is mainly due to the use of inductive bias (synchronization) in the design of the resulting deep neural networks. We also verify the fact that traditional methods tend to rely on channel information rather than device information to distinguish devices, which explains their dissatisfactory performance in open-set settings. This conclusion sheds important light on the design of RFF extraction methods.

The rest of this paper is organized as follows. Section II describes the system model. Section III elaborates the details of the proposed method, while Section IV provides the theoretical analysis of the proposed method. Section V presents the experimental results. Finally, Section VI concludes this paper.

\section{System Model}

\subsection{RF Fingerprinting}
We consider a PLA system as shown in Fig. \ref{fig:diagram_rff}, containing $K$ transmitters~(Txs) denoted by $\{\text{Tx}_1, \text{Tx}_2, ..., \text{Tx}_K\}$ and one receiver~(Rx). The transmission of a preamble signal from $\text{Tx}_i, \forall i \in {1,2,..K}$ can be represented as 
\begin{equation}
	\vecr = f_i(\vecx),
\end{equation}
where $\vecx \in \mathbb{C}^{M}$ is the preamble signal with length $M$, $\vecr \in \mathbb{C}^{M}$ is the received signal at the Rx, $\mathbb{C}$ indicates the set of complex numbers, and the function $f_i$ represents the transmission between $\text{Tx}_i$ and $\text{Rx}$. Note that for an ideal channel, $f_i$ includes a representation of the hardware properties of $\text{Tx}_i$, therefore imprinting the identity-relevant information on the corresponding received signal $\vecr$. In reality, $f_i$ also contains distinctions related to the different channels among the devices that complicate the extraction of the hardware features.

As described in Fig.~\ref{fig:diagram_rff}, the goal of PLA is to determine the identity $\vecy$~(a $K$-dimensional one-hot vector) of a device from the received signal $\vecr$. This is typically done using a feature vector $\vecz$ (i.e., RFF) derived from the signals rather than using the signal itself. Let $\mathscr{F}\triangleq\{(\vecr^{(n)}, \vecy^{(n)})_{n=1}^N\}$ be the training set. A PLA can be formulated as a classification problem given by:
\begin{align}
	\label{eq:origin_proplem} &\underset{\W}{\operatorname{min}} \quad \mathcal{L}(\W) \triangleq 
	     - \frac{1}{N} \sum_{n=1}^{N} \ln p_{\W}(\vecy^{(n)}|\vecz^{(n)}) \\
	\label{eq:origin_condition} &\text{subject to} \quad\quad\quad\quad \vecz^{(n)}  =F(\vecr^{(n)}),
\end{align}
where $p_{\W}(\vecy|\vecz)$ is the probability that the system classifies $\vecz$ to device identity $\vecy$ (which is often realized by a softmax function), $\W=\{\{\vecw_j\}_{j=1}^{K}\}$ represents the parameter of the softmax function, and $F: \mathbb{C}^M \rightarrow\mathbb{R}^m$ is the function used to extract the RFF from the received signal, referred to as the RFF extractor. Different RFF solutions lead to different forms of $F(\cdot)$. In traditional RFF solutions,  $F(\cdot)$ is a hand-crafted signal processing function, whereas in recent deep learning-based approaches $F(\cdot)$ is replaced by a deep neural network\footnote{Strictly speaking, a composition of several preprocessing steps and a deep neural network.} to effectively mimic the non-linear signal processing.

\subsection{Open-Set Physical Layer Authentication}
Regardless of the specific form of the RFF extractor $F(\cdot)$, there is a common problem in prior RFF solutions, i.e., the inability to cope with \emph{unknown} devices. This is because the system is originally built to maximize the classification performance among already \emph{known} devices. In reality, however, it is impossible to obtain all potential devices in advance when training a classifier since the number of devices in the system is unlikely to remain unchanged --- a problem known as \emph{open-set} authentication~\cite{geng2020recent}. An obvious solution is to retrain the classifier whenever a new device enters the system ~\cite{hanna2020open,hanna2020deep}. While this approach is theoretically sound, it may incur huge system resources~(e.g., time, energy) in a practical deployment. As an alternative, we seek a practical, low-cost solution that avoids retraining as much as possible.

To this end, we propose a completely different solution to open-set RFF authentication. Compared with existing approaches that continually retrain the classifier, we learn and extract a discriminative RFF that can be generalized to unknown devices. In other words, the learned RFF extractor $F(\cdot)$  not only extracts distinguishing features for known devices but also discriminates against completely unknown devices as well. We then perform authentication by comparing the \emph{similarity} between RFFs using a given distance function (e.g., the Euclidean or cosine distance). If two RFFs, say $\vecz^{(i)}$ and $\vecz^{(j)}$, are highly similar to each other, they are considered to come from the same device; otherwise, they are assumed to be different. Mathematically, this procedure is formulated as follows:
\begin{equation}
\begin{cases}
D\left(\vecz^{(i)}, \vecz^{(j)}\right) \leq T \quad \Rightarrow \quad \vecy^{(i)} = \vecy^{(j)} \\
D\left(\vecz^{(i)}, \vecz^{(j)}\right) > T  \quad \Rightarrow  \quad \vecy^{(i)} \neq \vecy^{(j)}, \\
\end{cases}
\label{eq:verification}
\end{equation}
where $T$ is a threshold that is optimized by the training dataset. No classifier is needed in this procedure. 
To facilitate subsequent learning by the softmax-based loss function, we use the cosine distance in this work\footnote{ \bflag{We adopt the cosine distance base on the following two considerations.
1) The finite range of the cosine distance, which meets the requirements of the Lipschitz continuous condition in DL training; 
2) For more stable and direct optimization of the distance among RFFs (see Section II).}}, as follows:
\begin{equation}
D\left(\vecz^{(i)}, \vecz^{(j)}\right) = 1- \frac{{\vecz^{(i)}}^T \vecz^{(j)} }{\parallel \vecz^{(i)} \parallel \parallel \vecz^{(j)} \parallel}.
\label{eq:dist}
\end{equation}

The only question that remains is how to learn an RFF extractor $F(\cdot)$ without accessing unknown devices. In the next section, we achieve this goal by presenting a novel model-and-data driven DL framework.

\begin{figure*}[t]
	\centering
	\includegraphics[width=0.95\linewidth]{\rootpath/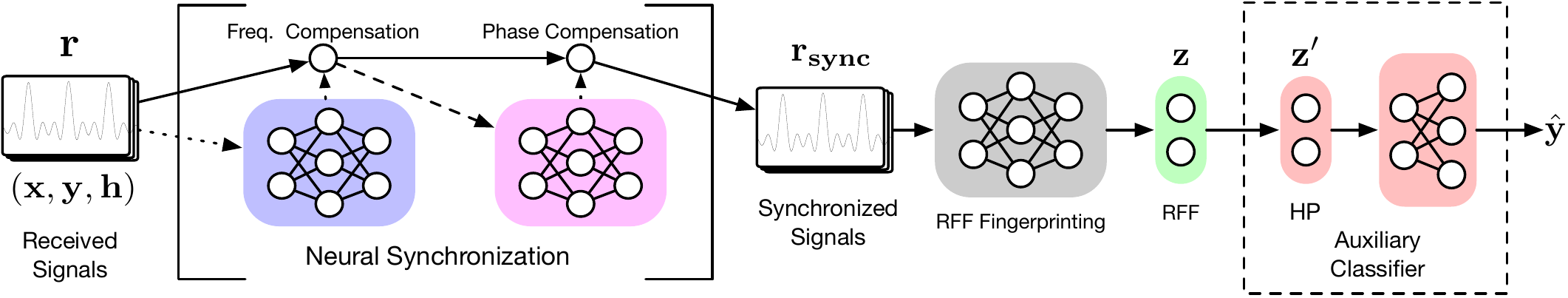}
	\caption{Neural Synchronization for RFF Extraction. The proposed Neural Synchronization~(NS) module is primarily composed of two deep neural networks with the same structure. They are referred to as the frequency offset estimator~(blue) and the phase offset estimator~(pink). The estimated frequency and phase offsets are used to synchronize the baseband signal. The synchronized signal is then used for fingerprint extraction using the neural network RFF extractor~(gray), and finally the auxiliary linear classifier~(red) provides training supervision for end-to-end training.}
	\label{fig:diagram}
\end{figure*}

\section{Neural Synchronization for RFF Extraction}
In this section, we elaborate the framework of the proposed neural synchronization~(NS)-based RFF. This framework combines the advantages of model-based signal processing priors and the data-driven learning ability of DNN. A block diagram of the proposed framework is shown in Fig.~\ref{fig:diagram}.

\subsection{Module Design}
As shown in Fig.~\ref{fig:diagram}, the NS-based RFF extractor consists of three basic neural networks with similar architectures. The first two are the proposed NS module that estimate the phase and frequency offsets of the input signals and output the signals compensated by these offsets. The last neural network is the RFF extractor. 

We begin with the design of the proposed NS module, which generalizes traditional carrier synchronization (TS) techniques used in previous RFF extraction methods~\cite{peng2018design, merchant2018deep, yu2019robust}. TS was originally designed to compensate for the frequency and phase offsets in the received signals. These offsets are often caused by low-cost oscillators at the receiver, which jeopardize the communication quality. In particular, TS performs compensation by first estimating these offsets, typically via maximum likelihood estimation (MLE):
\begin{align}
	 \omega_{\text{TS}}, \phi_{\text{TS}} = \operatorname{arg}\underset{\omega, \phi}{\max}\quad p(\vecr|\omega, \phi),
	 \label{formula:MLE-for-TS}
\end{align}
where $\omega$ and $\phi$ are the frequency and phase offsets respectively, and $\vecr$ is the received signal. In general, the problem in \eqref{formula:MLE-for-TS} can be further rewritten as~\cite{merchant2018deep}:
\begin{equation}
\begin{aligned}
	 \omega_{\text{TS}}, \phi_{\text{TS}} = \operatorname{arg}\underset{\omega, \phi}{\min} \sum_{t=1}^{M} \parallel \vecr(t) - \vecx(t)\exp\{j 2 \pi (\omega t- \phi)\} \parallel^2,
	 \label{formula:impl-for-TS}
\end{aligned}
\end{equation}
where $\vecx$ is the preamble signal, $\vecr$ is the received signal, $\vecx(t)$ and $\vecr(t)$ denote the $t$-th element of $\vecx$ and $\vecr$, respectively, and $M$ is the length of $\vecr$. Once $\omega_{\text{TS}}$ and $\phi_{\text{TS}}$ are obtained, the compensation of the received signal is performed as
\begin{equation}
\begin{aligned}
 \vecr_{\text{TS}}(t) = \vecr(t)\exp\{-j 2 \pi (\omega_{\text{TS}} t- \phi_{\text{TS}})\}.
\end{aligned}
\end{equation}
TS has been shown to be a useful technique for communication tasks. However, the offsets themselves may provide information about the device itself. Simply removing these offsets in the received signal may lead to a loss of information about the device identity when extracting RFFs from the compensated signals.  

To this end, we develop a neural network generalization of the synchronization process that can automatically determine how to perform compensation in a data-driven manner. More specifically, rather than performing offset estimation as in \eqref{formula:impl-for-TS}, which removes information about the identity of the devices, we propose to estimate the offsets by two deep neural networks trained with data. Let $F_{\theta_{\omega}}(\cdot)$ and $F_{\theta_{\phi}}(\cdot)$ be the deep neural networks used in the estimation of the frequency and phase offsets, respectively. The compensation needed for RFF extraction is performed as follows:
\paragraph{Neural Frequency Compensation}
We first adopt a neural network $F_{\theta_{\omega}}$ to estimate the frequency offset $\omega_{\text{NS}}$:
\begin{equation}
\begin{aligned}
\omega_{\text{NS}} = F_{\theta_{\omega}}(\vecr),
\label{eq:freq}
\end{aligned}
\end{equation}
where $\theta_{\omega}$ contains the parameters (i.e., weights and biases) of the neural network. \bflag{We call $\omega_{\text{NS}}$ the \emph{device-irrelevant frequency offset} since it is aimed at device identification.} Given $\theta_{\omega}$, frequency compensation of the received signal is performed by 
\begin{equation}
\vecr_{\omega}(t) = \vecr(t)\exp\{-j 2 \pi \omega_{\text{NS}} t\}.
\end{equation}
\paragraph{Neural Phase Compensation}
Similarly, \bflag{the \emph{device-irrelevant phase offset}}, denoted by $\phi_{\text{NS}}$, is estimated by another neural network $F_{\theta_{\phi}}$ using $\vecr_{\omega}(t)$:
\begin{equation}
\begin{aligned}
\phi_{\text{NS}} = F_{\theta_{\phi}}(\vecr_{\omega}),
\label{eq:phase}
\end{aligned}
\end{equation}
where $\theta_{\phi}$ includes the parameters of the phase neural network. The synchronized signal $\vecr_{\text{sync}}$ is then obtained by computing
\begin{equation}
\begin{aligned}
 \vecr_{\text{NS}}(t) = \vecr_{\omega}(t)\exp\{j 2 \pi \phi_{\text{NS}}\}.
\label{eq:syncr}
\end{aligned}
\end{equation}

The two networks, i.e., $F_{\theta_{\omega}}(\cdot)$ and $F_{\theta_{\phi}}(\cdot)$, constitute the NS module, which compensate for device-irrelevant phase and frequency offsets in the received signal. 
After compensation, the signal $\vecr_{\text{NS}}$ is sent to a third neural network  $F_{\theta_{\text{RFF}}}(\cdot)$ with parameters  $\theta_{\text{RFF}}$ to compute the desired RFF:
\begin{equation}
\begin{aligned}
\vecz = F_{\theta_{\text{RFF}}}(\vecr_{\text{NS}}).
\end{aligned}
\end{equation}
We denote the entire ``neural frequency compensation + neural phase compensation + RFF computation'' procedure above as:
\begin{equation}
\begin{aligned}
\vecz = F_{\Theta}(\vecr),
\end{aligned}
\end{equation}
where $F_{\Theta}(\cdot)$ is the proposed NS-based RFF extractor, and $\Theta = \{\theta_{\omega}, \theta_{\phi}, \theta_{\text{RFF}}\}$ denotes the parameters of the entire RFF extraction network. The parameters in $\Theta$ will be learned jointly using an objective function that will be described later. 

\begin{figure*}[t]
	\centering
	\includegraphics[width=\linewidth]{\rootpath/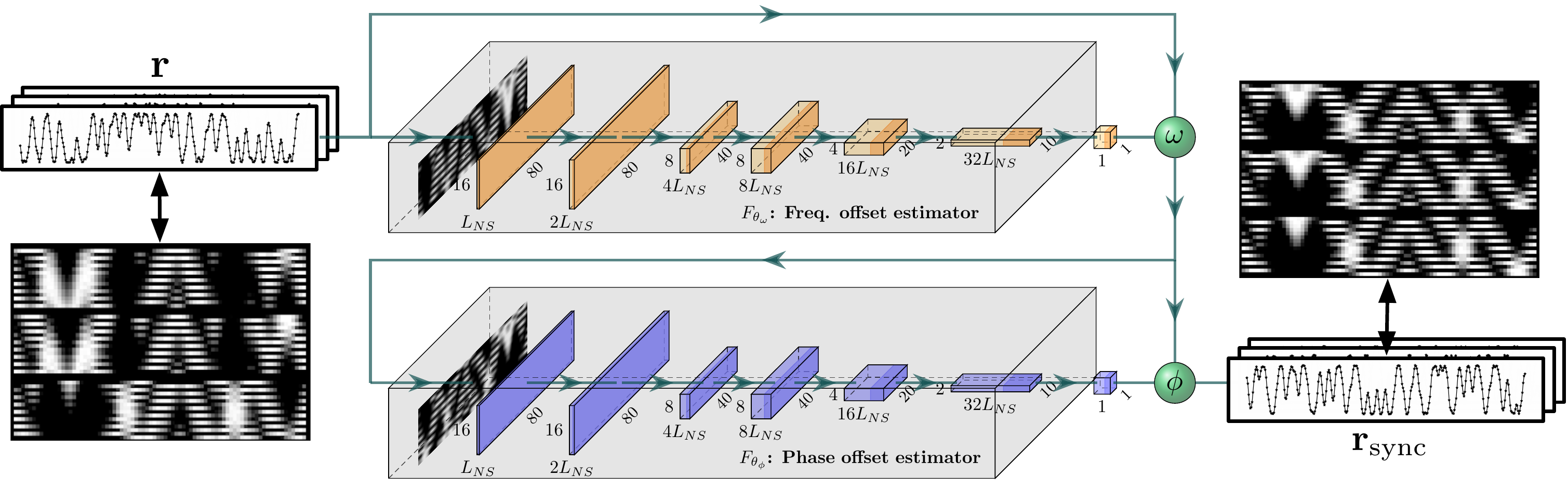}
	\caption{Diagram of the proposed Neural Synchronization module.}
	\label{fig:nn}
\end{figure*}

We propose to use a \emph{convolutional neural network} (CNN) for all the three networks $F_{\theta_{\omega}}(\cdot), F_{\theta_{\phi}}(\cdot)$, and $F_{\theta_{\text{RFF}}}(\cdot)$. The motivation for using the convolution operation is that the preamble signal often exhibits high periodicity (e.g., the ideal preamble signal is often comprised of several identical symbols, as in the IEEE 802.15.4 standard~\cite{ieee2015ieee}.), and convolution is excellent at extracting periodic patterns~\cite{vanwavenet}. In principle, 1D convolution can be used for this purpose; however, it cannot capture cross-period patterns that $\vecr$ exhibits, which may be useful for identifying different devices. We therefore propose the use of 2D convolution to extract a richer set of patterns. As shown in Fig.~\ref{fig:nn}, there are two components in the proposed 2D convolution network (which we refer to as basic CNN~(BCNN) from now on): a signal-to-image layer and a series of convolution layers. The complexity of these layers can be controlled by some hyper-parameters as summarized in Table~\ref{tb:nn}. 

\emph{i) Signal-to-Image Layer}: This layer converts the original 1D signal into a 2D image to facilitate the subsequent processing. Formally, given an input signal $\vecr \in \mathbb{C}^{M}$, this layer computes its image representation $\mathbf{I} \in \R^{2\times \frac{M}{S}\times S}$ as:
\begin{equation}
\mathbf{I}_{1,i,j}=\Re\left\{\mathbf{R}_{i,j}\right\}, \qquad 
\mathbf{I}_{2,i,j}=\Im\left\{\mathbf{R}_{i,j}\right\},
\end{equation}
\begin{equation}
\mathbf{R}= \left[
 \begin{matrix}
   \vecr_1 & \vecr_2 & \cdots & \vecr_{S-1} &\vecr_S \\
   \vecr_{S+1} & \vecr_{S+2} & \cdots  & \vecr_{2S-1} &\vecr_{2S} \\
   \cdots & \cdots & \cdots & \cdots & \cdots \\
   \vecr_{M-S+1} & \vecr_{M-S+2} & \cdots & \vecr_{M-1} &\vecr_M 
  \end{matrix}
  \right],
  \label{formula:signal2image}
\end{equation}
where $\mathbf{R} \in \C^{\frac{M}{S}\times S}$ and $\Re\{\cdot\}$ and $\Im\{\cdot\}$ are, respectively, the real and imaginary parts of the input, and $\vecr_i$ is the $i$-th element of the received signal $\vecr$. The two channels of the image correspond to the real and imaginary parts of the signal, with each pixel $\mathbf{I}_{c,i,j}, c\in\{1,2\}$, representing one dimension of the input signal $\vecr$. 
The width of the image, denoted by $S$, is set such that each row in the image corresponds to one-half a symbol period~(i.e., 16 chips in the IEEE 802.15.4 standard). Therefore pixels in the same row are from the same symbol, whereas pixels belonging to disjoint rows come from different symbols. 

\emph{ii) Convolution Layers}: With the image representation in \eqref{formula:signal2image}, we are ready able to perform 2D convolution to extract both intra-period and inter-period patterns. Mathematically, given an image $\mathbf{I} \in \mathbb{R}^{C_{\mathbf{I}} \times H_{\mathbf{I}} \times W_{\mathbf{I}}}$ and a kernel $\mathbf{K} \in \mathbb{R}^{C_{\mathbf{K}} \times H_{\mathbf{K}} \times W_{\mathbf{K}} }$, the 2D convolution $\mathbf{I} * \mathbf{K}$ is defined as:
\begin{equation}
(\mathbf{I} * \mathbf{K})_{i, j}=\sum_{i_{h}=1}^{H} \sum_{i_{w}=1}^{W} \sum_{i_{c}=1}^{C} \mathbf{K}_{i_{c}, i_{h}, i_{w}} \mathbf{I}_{i_{c},i+i_{h}-1, j+i_{w}-1}.
\end{equation}
\bflag{Inspired by~\cite{simonyan2014very}, we adopted small convolutional filters with size $3\times3$ in our convolutional neural networks. The $3\times3$ filters are the smallest available for capturing the 2D correlations of an image. By stacking the small filter layers, we can use fewer parameters to achieve the same effective receptive field as filters with larger size~\cite{simonyan2014very}.} As in many other deep learning frameworks~\cite{yu2017unsupervised, he2015delving,he2016identity}, we consecutively apply batch normalization~(BN) and leaky ReLU activation~(LReLU) after the convolution operation. The former helps to stabilize and accelerate training, whereas the latter serves as a nonlinear transformation in the network. The three operations together form a layer, denoted by $g_{n_g}$, in the convolution neural network, where $n_g$ is the layer index. We repeatedly apply these operations to extract high-level patterns from the input image $\mathbf{I}$, resulting in a series of convolution layers $g = g_1 \circ g_2 \circ ... \circ g_{N_{g}-1}$ with $N_{g}-1$ layers. A fully connected layer is appended to these convolution layers to compute the final output $\omega$, $\phi$, or $\vecz$. 

\bflag{Note that the number of layers depends on the size of the input image. For example, the neural network structure we use in our experiments~(with input size $16\times 80$) is presented in Table \ref{tb:nn}. After applying six convolutional layers, the output size for the 6th layer is too small to apply additional convolution (i.e., smaller than $3\times3$); therefore we stop convolving and instead adopt a fully connected layer, which is also the final layer of the CNN. The number of filters are controlled by a hyperparameter $L$, which is used for model complexity adjustment.}

To more intuitively explain the benefits of NS, we visualize the real part of the synchronized signals from three different devices in Fig.\ref{fig:vis}. In contrast to TS, which directly eliminates the distinctions between these signals, the proposed NS module can preserve device information while maintaining signal alignment.  

 \begin{figure*}[hbt]
	\centering
    \includegraphics[width=0.95\linewidth]{\rootpath/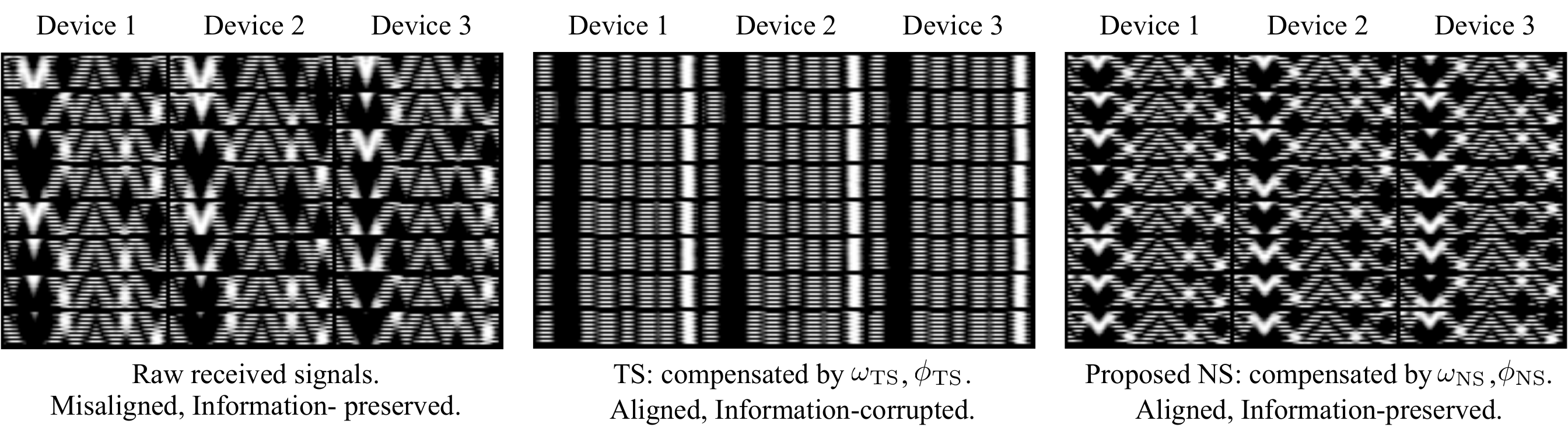}
	\caption{Visualization of synchronized signals. }
	\label{fig:vis}
\end{figure*}
\begin{table}[!t]  

\caption{The structure of basic-CNN}
\centering
\begin{tabular}{crc}  

\toprule
\multicolumn{3}{l}{{\bf HyperParams}: Image width $S$, complexity $L$} \\  
\midrule  
\multicolumn{3}{l}{{\bf Input}: Signal $\vecr \in \mathbb{C}^{M}$ $\rightarrow$ Image $\mathbf{I} \in \mathbb{R}^{2\times \frac{M}{S} \times S}$} \\  
\midrule
\multicolumn{3}{l}{\bf Convolution layers}  \\ 
  {\bf Layers} & {\bf Filters/Stride/Padding} & {\bf Activation}\\
  1 & $L\times 3\times3/1/1$   & BN + $\text{LReLU}_{(0.2)}$   \\  
  2 & $2L\times 3\times3/1/1$  & BN + $\text{LReLU}_{(0.2)}$   \\  
  3 & $4L\times 3\times3/2/1$  & BN + $\text{LReLU}_{(0.2)}$   \\  
  4 & $8L\times 3\times3/1/1$  & BN + $\text{LReLU}_{(0.2)}$   \\  
  5 & $16L\times 3\times3/2/1$ & BN + $\text{LReLU}_{(0.2)}$   \\  
  6 & $32L\times 3\times3/2/1$ & BN + $\text{LReLU}_{(0.2)}$   \\ 
\midrule
\multicolumn{3}{l}{{\bf Output}: FC($\frac{LM}{2}$, output dimension)}\\
\bottomrule  

\end{tabular}
\label{tb:nn}
\end{table}

\subsection{Learning Algorithm}
Given the proposed NS-based RFF extractor $ F_{\Theta}(\cdot)$, we switch our focus to training the above network $ F_{\Theta}(\cdot)$ with the collected data $\{(\vecr^{(n)}, \vecy^{(n)})^N_{n=1}\}$. Ideally, we wish that our learning algorithm and objective can maximally separate RFFs for different devices. We demonstrate how to achieve this goal below with a novel objective, which is motivated by the limitations of a particular na\"ive.

\paragraph{A Na{\"i}ve Solution} A vanilla approach for learning $ F_{\Theta}(\cdot)$ to optimize the  objective~\eqref{eq:origin_proplem} can be found by defining the distribution $p_{\mathbf{W}}(\vecy|\vecz)$ as a softmax distribution:

\begin{align}
\label{eq:score-original}
p_{\mathbf{W}}(\vecy_i | \vecz) &= \frac{\exp\{ \vecw^{\top}_i \vecz  \}}{\sum_j \exp\{ \vecw^{\top}_j \vecz  \}},
\end{align}
where $\vecz$ is the RFF and $\W = \{\{\vecw_j \}_{j=1}^K\}$ are the parameters of the softmax score. With this construction, we can now optimize the objective in (2) to learn $\W$. 

There is one major issue with the above na\"ive approach: maximizing the classification score does not necessarily produce a good distance between features. This is because the classification score in \eqref{eq:score-original} can be pushed to be arbitrarily high by enlarging the norm of some of the obtained RFF, as demonstrated by the following proposition:
\begin{proposition} 1
For all $\vecz$ that satisfy $\vecw^{\top}_i \vecz > \vecw^{\top}_j \vecz$, $\forall j\neq i$, and any $\lambda> 1$, we have
\begin{align}
\frac{\exp\{ \lambda \vecw^{\top}_i \vecz  \}}{\sum_j \exp\{ \lambda \vecw^{\top}_j \vecz \}} \ge  \frac{\exp\{   \vecw^{\top}_i \vecz  \}}{\sum_j \exp\{ \vecw^{\top}_j \vecz \}}.
\end{align}
\end{proposition}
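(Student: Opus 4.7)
The plan is to reduce the claimed inequality to a term-by-term comparison of two sums, after normalizing both ratios by the leading exponential. Write $a_j \triangleq \vecw_j^\top \vecz$ so the hypothesis becomes $a_i > a_j$ for all $j \neq i$. Factoring $e^{\lambda a_i}$ (respectively $e^{a_i}$) out of numerator and denominator on the two sides, the inequality is equivalent to
\begin{equation*}
\frac{1}{\sum_j e^{\lambda (a_j - a_i)}} \;\ge\; \frac{1}{\sum_j e^{a_j - a_i}},
\end{equation*}
which, by taking reciprocals of positive quantities, is the same as
\begin{equation*}
\sum_j e^{\lambda (a_j - a_i)} \;\le\; \sum_j e^{a_j - a_i}.
\end{equation*}

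Next I would verify this final inequality summand by summand. For the index $j = i$ both exponents vanish and both terms equal $1$. For any $j \neq i$, the hypothesis yields $a_j - a_i < 0$, and since $\lambda > 1$ multiplying a negative real by $\lambda$ only decreases it; monotonicity of $e^{\cdot}$ then gives $e^{\lambda(a_j-a_i)} \le e^{a_j - a_i}$ (with strict inequality whenever $a_j \neq a_i$). Summing over $j$ gives the required bound, and unwinding the reciprocals recovers the statement of the proposition.

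I do not anticipate any genuine obstacle: the argument is a one-line consequence of the fact that scaling logits by a factor $\lambda > 1$ sharpens the softmax, moving mass onto the $\arg\max$ coordinate. The only point that deserves a sentence of care is the sign handling when inverting the inequality after factoring out the winning logit, since the direction of the inequality would flip if one forgot that all denominators are positive. This is also the conceptual content the authors will presumably exploit in the remainder of the section: because the softmax can be driven to any desired value simply by rescaling $\vecz$ (equivalently, enlarging its norm), the naive cross-entropy objective cannot by itself control the geometry of the learned RFF features, motivating the hyperspherical (cosine-distance) formulation.
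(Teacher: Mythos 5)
Your proposal is correct and follows essentially the same route as the paper's proof: both factor out the winning logit's exponential, reduce the claim to comparing $\sum_j e^{\lambda(a_j-a_i)}$ with $\sum_j e^{a_j-a_i}$, and conclude term by term from $a_j - a_i < 0$ and $\lambda > 1$. Your write-up is in fact a bit more careful than the paper's (which glosses over the fact that the two sides are divided by different quantities and that taking reciprocals reverses the inequality), but there is no substantive difference in the argument.
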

\begin{proof}
Dividing both sides of the inequality by $\exp\{\lambda \vecw^{\top}_i \vecz\}$, we can obtain
\[
\frac{1}{\sum_j \exp\{ \lambda (\vecw^{\top}_j \vecz - \vecw^{\top}_i \vecz)\}} \ge  \frac{1}{\sum_j \exp\{ \vecw^{\top}_j \vecz - \vecw^{\top}_i \vecz \}},
\]
where $\lambda > 1$, $\sum_j \exp{x}$ is monotonically increasing, and $w_j^T z - w_i^T z < 0$ for all $j \ne i$. Therefore, the inequality is directly established.
\end{proof}
The above proposition implies that the classifier can easily achieve a high classification score by manipulating the norms of the learned RFF. For example, we can increase the norms of RFF whose classification score is high and decrease the norms of those RFF whose classification score is low. Therefore, although this method is optimal in terms of the classification score, the learned RFFs are not necessarily well-separated in the space (note that in open-set settings we can only classify RFFs by their distance). As a result, the features obtained through the above na\"ive training may not be suitable for open-set RFF authentication.

\paragraph{Hypersphere Projection} To overcome the above issue in the softmax classifier, we propose to use a \emph{hyperspherical representation} for the learned RFF, inspired by recent advance in face recognition~\cite{wen2016discriminative,ranjan2017l2,deng2019arcface}. The RFF in our framework does not lie in Euclidean space but is on the surface of a hypersphere. These hyperspherical RFFs are obtained by applying a hyperspherical projection on the original RFF, namely:
\begin{equation}
\begin{aligned}
	\vecz^\prime = \alpha \frac{\vecz}{\parallel \vecz \parallel},
\label{eq:hp}
\end{aligned}
\end{equation}
where $\alpha >0$ is the radius of the hypersphere, which is left as a hyperparameter. With this hyperspherical representation, the classification probability $q_{\W}(\vecy_i|\vecz)$ is now (re-)defined as 
\begin{align}
\label{eq:score-hp}
q_{\mathbf{W}}(\vecy_i | \vecz) &= \frac{\exp\{ \vecw'^{\top}_i \vecz'\}}{\sum_j \exp\{ \vecw'^{\top}_j \vecz' \}},
\end{align}
where
\begin{equation}
    \vecw' = \frac{\vecw }{\|\vecw\|}.
\end{equation}
\bflag{Now, as both $\vecw'$ and $\vecz'$ in \eqref{eq:score-hp} have fixed norms, maximizing \eqref{eq:score-hp} is equivalent to minimizing the cosine distance between $\vecw'$ and $\vecz'$, hence minimizing the cosine distance between all $\vecz'$ belonging to the same device. In other words, the hyperspherical representation can guarantee that the optimization of $q_{\mathbf{W}}(\vecy|\vecz)$ and the cosine distances coincide with each other.} In the following experiments, we will see that this hyperspherical representation is also a key step for improving the performance of the proposed RFF framework. 

Finally, given the proposed NS-based RFF extractor $F_{\Theta}(\cdot)$, the auxiliary classifier in (\ref{eq:score-hp}), and the training set $\{(\vecr^{(n)}, \vecy^{(n)})_{n=1}^N\}$, the reformulated learning objective of (\ref{eq:origin_proplem})-(\ref{eq:origin_condition})  becomes
\begin{equation}
\begin{aligned}
	\label{eq:obj} \underset{\Theta, \W}{\operatorname{min}} \quad &\mathcal{L}(\Theta, \W)\\
	 &= - \frac{1}{N} \sum_{n=1}^{N}\ln q_{\W}\left(\vecy^{(n)} | F_{\Theta}(\vecr^{(n)})\right).
\end{aligned}
\end{equation}

The end-to-end training algorithm is summarized in Algorithm~\ref{alg:NS}. \bflag{Note that the auxiliary classifier $\mathbf{W}$ here is to provide training supervision for more discriminative RFFs. Once the training is done, this classifier will be discarded. Only the RFF extractor is required for RFF authentication~(since we compare RFFs by their pairwise distance).}

\begin{algorithm}[t]
  \caption{NS-based RFF Training Algorithm}
  \label{alg:NS}
\begin{algorithmic}
  \STATE {\bfseries Input:} Received preamble signal with device label pairs $\{(\vecr^{(i)}, \vecy^{(i)})\}_{i=1}^{N}$ constructed by $\mathscr{F}$; the complexity parameters $L_{\text{NS}}$ and $L_{\text{RFF}}$.
  \STATE {\bfseries Output:} $\Theta^* = \{\theta_{\omega}^*, \theta_{\phi}^*, \theta_{\text{RFF}}^*\}$
  \STATE {\bfseries Hyperparam:} learning rate $\eta$, hypersphere radius $\alpha$
  \REPEAT
  \FOR{$i=1$ {\bfseries to} $N$}
  \STATE compute  $\vecz^{(i)} = F_{\Theta}(\vecr^{(i)})$
  \STATE compute $q_{\W}(\vecy^{(i)} | \vecz^{(i)})$;
  \ENDFOR
  \STATE compute $\mathcal{L}=-\frac{1}{N} \sum_{i=1}^{N} \log q_{\W}\left(\vecy^{(i)} | \vecz^{(i)}\right)$;
  \STATE $\theta_{\omega} \leftarrow \theta_{\omega}-\eta \nabla_{\theta_{\omega}} \mathcal{L}$;
  \STATE $\theta_{\phi} \leftarrow \theta_{\phi}-\eta \nabla_{\theta_{\phi}} \mathcal{L}$;
  \STATE $\theta_{\text{RFF}} \leftarrow \theta_{\text{RFF}}-\eta \nabla_{\theta_{\text{RFF}}} \mathcal{L}$;
  \STATE $\W \leftarrow \W-\eta \nabla_{\W} \mathcal{L}$;
  \UNTIL{convergence}
  \STATE \textbf{return} $F_{\Theta}$.
\end{algorithmic}
\end{algorithm}

\section{Theoretical Analysis}
In this section, we illustrate the effectiveness of the proposed NS-based framework from an information-theoretic perspective. Let us begin with a Markov chain that describes a regular RF fingerprinting process:
\begin{equation}
  (\vecy, \vecx, \vech) \leftrightarrow \vecr \rightarrow \vecr^\prime \rightarrow \vecz,
\label{eq:markov}
\end{equation}
where $\vecy$ is the identity of the device, $\vecx$ is the preamble, $\vech$ is the wireless channel, $\vecr$ is the received signal, and $\vecr^\prime$ is the processed version of $\vecr$. Symbol $\vecz$ is the RFF extracted from $\vecr^\prime$. Therefore, given $\vecr^\prime$, the received signal $\vecr$ and RFF $\vecz$ are independent. Then according to the data processing inequality~\cite{mackay2003information}, we have:
\begin{equation}
  I(\vecr; \vecy) \ge I(\vecr^\prime; \vecy) \ge I(\vecz; \vecy),
\label{eq:dpi}
\end{equation}
where $I(\cdot; \cdot)$ is mutual information, and the equality is only achieved if $\vecz$ and $\vecr'$ are both sufficient statistics of $\vecr$ with respect to $\vecy$. The inequalities in \eqref{eq:dpi} imply that the introduction of any preprocessing step $\vecr \to \vecr'$  will inevitably lead to a loss of information about $\vecy$. 
Traditional synchronization~(TS) is designed for recovering $\vecx$, which is not directly relevant to extracting the device hardware information of $\vecy$ from $\vecr$. Therefore, such $\vecr^\prime$ from TS is not necessarily a sufficient statistic for the RFF identification. Let $\vecr^\prime = F_{\text{pre}}(\vecr)$ be any preprocessing that is not aimed at identity extraction, and define the information cost as $C\triangleq  I(\vecr; \vecy) - I(\vecz; \vecy)$. Then we rewrite \eqref{eq:dpi} as 
\begin{align}
	C \ge I(\vecr; \vecy) - I(F_{\text{pre}}(\vecr);\vecy) >0.
\end{align}
This inequality indicates that no matter how powerful the learning model that is applied in subsequent fingerprint extraction, a certain amount of information loss exists due to the inappropriate preprocessing.

By contrast, the proposed NS-based RFF extractor $F_{\Theta}$ is a unified trainable model that combines the procedures of preprocessing and RFF extraction. The corresponding Markov chain in \eqref{eq:markov} can be simplified as 
\begin{equation}
  (\vecy, \vecx, \vech) \leftrightarrow \vecr \overset{F_{\Theta}}{\longrightarrow} \vecz, 
\label{eq:markov2}
\end{equation}
and the corresponding inequality as follows:
\begin{equation}
  C=I(\vecr; \vecy) - I(\vecz; \vecy)\ge0.
\end{equation}
In this setting, the information cost $C$ can be arbitrarily close to 0 by directly maximizing $I(\vecz; \vecy)$. In fact, the proposed end-to-end training with the learning objective in (\ref{eq:obj}) is equivalent to maximizing a lower bound on $I(\vecz; \vecy)$, as demonstrated by the following theorem.
\begin{theorem} 1
Given the Markov chain in (\ref{eq:markov2}), the RFF extractor $p_{\Theta}(\vecz|\vecr)=\delta(\vecz-F_{\Theta}(\vecr))$, auxiliary classifier $q_{\W}(\vecy|\vecz)$, and the training dataset distribution $p(\vecy, \vecr|\mathscr{F})$, the variational lower bound of $I(\vecz; \vecy)$ is given by
\begin{equation}
\begin{aligned}
	I(\vecz; \vecy) \ge -\mathcal{L}_v \triangleq  \underset{\vecy, \vecr\sim p(\vecy, \vecr|\mathscr{F})}{\E} [\int d \vecz & p_{\Theta}(\vecz | \vecr) \ln q_{\vecw}(\vecy | \vecz)],
	\label{eq:lower}
\end{aligned}
\end{equation}
with the equality if and only if $\mathcal{D}_{\text{KL}}[p(\vecy | \vecz)||q_{\W}(\vecy | \vecz)]=0$, where $\mathcal{D}_{\text{KL}}$ is the Kullback-Leibler divergence. 
\label{thm:1}
\end{theorem}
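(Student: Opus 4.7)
The plan is to derive the stated variational lower bound by the classical Barber--Agakov argument, adapted to the Markov chain in (\ref{eq:markov2}). First I would start from the definition
\begin{equation*}
I(\vecz;\vecy) = \E_{p(\vecy,\vecz)}\!\left[\ln \tfrac{p(\vecy\mid\vecz)}{p(\vecy)}\right] = H(\vecy) + \E_{p(\vecy,\vecz)}[\ln p(\vecy\mid\vecz)],
\end{equation*}
where $p(\vecy,\vecz)$ is the joint distribution induced by the Markov chain $(\vecy,\vecx,\vech)\leftrightarrow\vecr\xrightarrow{F_\Theta}\vecz$, and $H(\vecy)$ is the entropy of the device prior (a constant determined by $\mathscr F$).

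Next I would introduce the variational classifier $q_\W(\vecy\mid\vecz)$ via the usual add-and-subtract trick:
\begin{equation*}
\ln p(\vecy\mid\vecz) = \ln q_\W(\vecy\mid\vecz) + \ln \tfrac{p(\vecy\mid\vecz)}{q_\W(\vecy\mid\vecz)}.
\end{equation*}
Taking the expectation with respect to $p(\vecy,\vecz)$ turns the second term into $\E_{p(\vecz)}\!\left[\mathcal{D}_{\text{KL}}(p(\vecy\mid\vecz)\,\|\,q_\W(\vecy\mid\vecz))\right]$, which is non-negative by Gibbs' inequality, and vanishes exactly when $q_\W(\vecy\mid\vecz)=p(\vecy\mid\vecz)$. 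Combined with $H(\vecy)\ge 0$, this yields
\begin{equation*}
I(\vecz;\vecy) \ge \E_{p(\vecy,\vecz)}[\ln q_\W(\vecy\mid\vecz)].
\end{equation*}

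Finally, I would use the deterministic RFF extractor $p_\Theta(\vecz\mid\vecr)=\delta(\vecz-F_\Theta(\vecr))$ and the Markov structure to rewrite the joint as $p(\vecy,\vecz)=\int d\vecr\, p(\vecy,\vecr\mid\mathscr F)\, p_\Theta(\vecz\mid\vecr)$, so that
\begin{equation*}
\E_{p(\vecy,\vecz)}[\ln q_\W(\vecy\mid\vecz)] = \E_{(\vecy,\vecr)\sim p(\vecy,\vecr\mid\mathscr F)}\!\left[\int d\vecz\, p_\Theta(\vecz\mid\vecr)\ln q_\W(\vecy\mid\vecz)\right] = -\mathcal{L}_v,
\end{equation*}
which is exactly the bound asserted in the theorem. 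Minimizing the training objective in (\ref{eq:obj}) (the Monte Carlo estimate of $\mathcal L_v$) therefore tightens this lower bound, justifying the claim that end-to-end training maximizes a variational surrogate for $I(\vecz;\vecy)$.

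The main subtle point — not a deep obstacle but worth flagging — is the equality condition. The derivation above shows the gap between $I(\vecz;\vecy)$ and $-\mathcal L_v$ is $H(\vecy) + \E_{p(\vecz)}[\mathcal{D}_{\text{KL}}(p(\vecy\mid\vecz)\|q_\W(\vecy\mid\vecz))]$, so strict equality requires both the KL term and $H(\vecy)$ to vanish. Since $H(\vecy)$ is a constant depending only on $\mathscr F$ and not on the learnable parameters $(\Theta,\W)$, it is natural to absorb it into the bound and phrase the equality condition purely in terms of the KL term, as the statement does; I would either adopt that convention explicitly or present the tight bound $I(\vecz;\vecy)\ge H(\vecy)-\mathcal L_v$ and note that the additive constant does not affect optimization.
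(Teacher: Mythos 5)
Your proof is correct and follows essentially the same route as the paper's Appendix A: the Barber--Agakov decomposition $I(\vecz;\vecy)=\mathcal{H}(\vecy)+\E[\ln p(\vecy\mid\vecz)]$, Gibbs' inequality on $\mathcal{D}_{\text{KL}}[p(\vecy\mid\vecz)\|q_\W(\vecy\mid\vecz)]$, and the rewriting of $p(\vecy,\vecz)$ through the deterministic extractor $p_\Theta(\vecz\mid\vecr)$. Your closing remark is also well taken: the paper's own derivation retains $\mathcal{H}(\vecy)$ in the lower bound (where the stated equality condition is exact), whereas the theorem as printed drops it, so the ``if and only if'' is strictly accurate only up to that additive constant --- precisely the convention issue you flag.
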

\begin{proof}
See Appendix A. 
\end{proof}

Note that $p_{\Theta}(\vecz|\vecr)=\delta(\vecz-F_{\Theta}(\vecr))$ is deterministic, while $p(\vecy, \vecr| \mathscr{F})$ can be approximated by the empirical data distribution $p(\vecy, \vecr|\mathscr{F}) = \frac{1}{N} \sum_{n=1}^{N} \delta_{\vecy^{(n)}}(\vecy) \delta_{\vecr^{(n)}}(\vecr)$. Then $\mathcal{L}_v$ can be approximated by
\begin{equation}
\begin{aligned}
  \mathcal{L}_{v}(\Theta, \vecw, \mathscr{F})
                  &\approx - \frac{1}{N} \sum_{n=1}^{N} \E_{p_{\Theta}\left(\vecz | \vecr^{(n)}\right)} \bigg[ \ln q_{\W}\left(\vecy^{(n)} | \vecz\right) \bigg]\\
                  &= - \frac{1}{N} \sum_{n=1}^{N} \ln q_{\W}\left(\vecy^{(n)} | F_{\Theta}(\vecr^{(n)}) \right),
\end{aligned}
\end{equation}
which is the proposed learning objective in (\ref{eq:obj}). This means that the proposed framework can better optimize the mutual information between device identities and RFFs than handcrafted preprocessing methods. This mutual information directly reflects the quality of the feature learning, which further influences its generalizability to unknown devices. This is why the proposed learning algorithm is necessary for open-set RFF authentication.

\begin{table*}[t]  

\caption{Baseline RFFs and Proposed NS-based RFFs. }

\centering
\begin{threeparttable}[b]
\begin{tabular}{c|c|c|c|c}  

\toprule 
\multirow{2}*{NN Design Types} &\multicolumn{3}{c|}{Methods~(Fig.~\ref{fig:diagram})}&\multirow{2}*{\#Parameters}\\
\cline{2-4}
 ~ & Preprocessing & RFF network & Auxiliary Classifier &~\\  
\midrule
 \multirow{3}*{Model-driven~(M)} & \multirow{3}*{TS} & Yu \emph{et.al}~\cite{yu2019multi} & Softmax  & 63 \bflag{M}\\
 \cline{3-5}
 ~ & ~ & \multirow{2}*{BCNN\tnote{\dag} } & Softmax & 12 \bflag{M}\\
 ~ & ~ & ~ & Softmax with HP\tnote{\dag}~~($\alpha=10$) & 12 \bflag{M}\\
\midrule
 \multirow{2}*{Data-driven~(D)} & \multirow{2}*{N/A}  & \multirow{2}*{BCNN\tnote{\dag} } & Softmax & 12 \bflag{M}\\
 ~ & ~ & ~ & Softmax with HP\tnote{\dag}~~($\alpha=10$)  & 12 \bflag{M}\\ 
\midrule
 \multirow{2}*{Model \& Data-driven~(M\&D)} & \multirow{2}*{NS\tnote{\dag}} & \multirow{2}*{BCNN\tnote{\dag} } & Softmax & 12 \bflag{M}\\
 ~ & ~  & ~ & Softmax with HP\tnote{\dag}~~($\alpha=10$)  & 12 \bflag{M}\\
\bottomrule  
\end{tabular}

\begin{tablenotes}
     \item[\dag] Proposed in this paper.
\end{tablenotes}

\end{threeparttable}

\label{tb:baseline}
\end{table*}

%
%
%
%
%
%
%

\begin{figure}[t]
	\centering
	\includegraphics[width=0.99\linewidth]{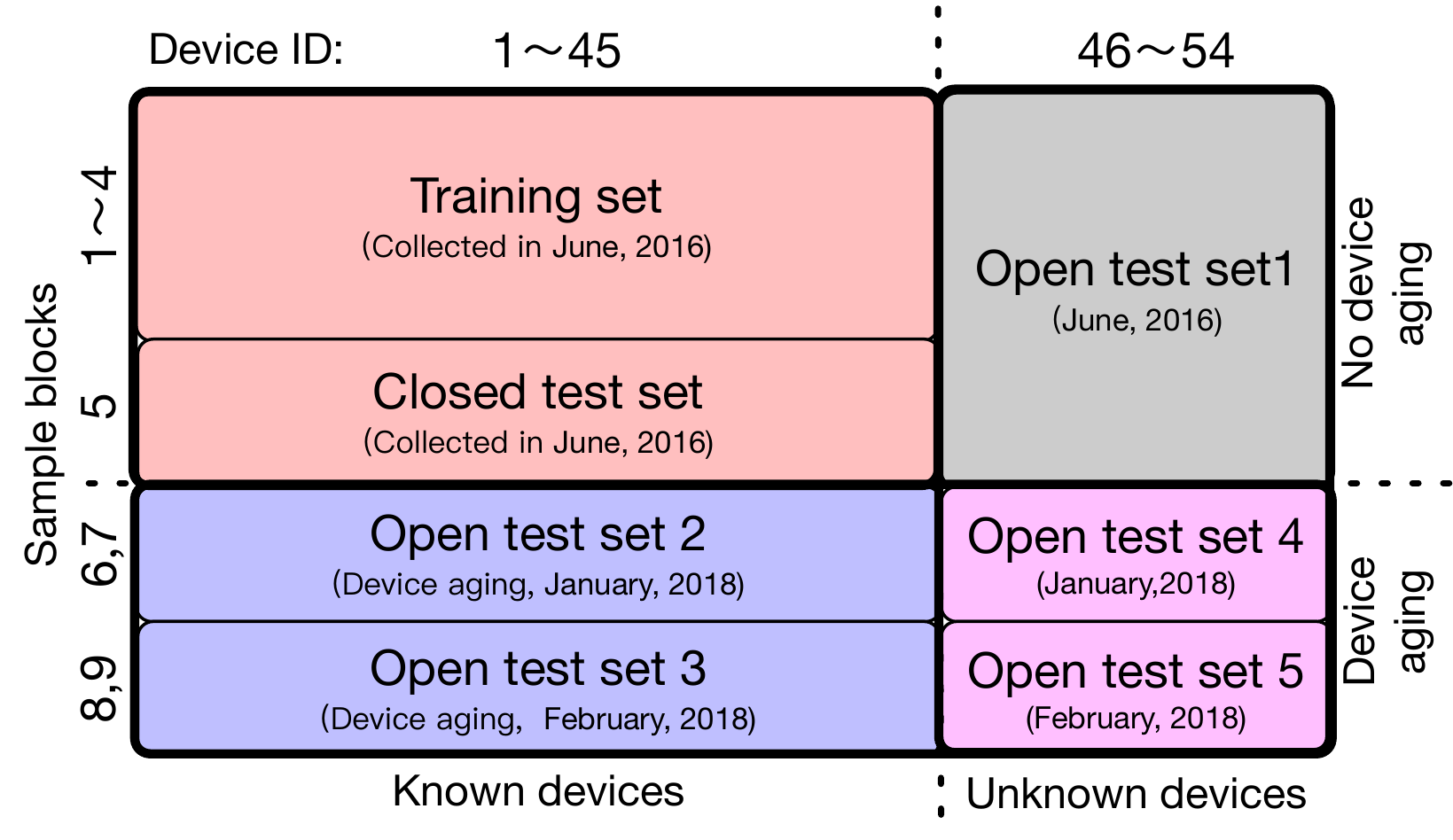}
	\caption{Data segmentation diagram.}
	\label{fig:dataset}
\end{figure}

\section{Experimental Evaluation}
In this section, we conduct a series of experimental tests to verify the effectiveness of the proposed NS-based RFF. We compare the performance of the proposed framework with that of conventional model-driven~(TS-based RFF) and data-driven methods~(pure DL-based RFF). For a more comprehensive evaluation, we divide the experiments into \bflag{four} parts: 1)~Performance comparison for closed and open test sets; 2)~Evaluating the performance for different signal-to-noise ratios~(SNRs); 3)~Performance comparison for different network complexity between the proposed NS-based RFF extractor and the pure DL-based RFF extractor; \bflag{4) Performance comparison of TS-based RFF with the frequency and phase offsets as additional features.}

The source codes for the proposed NS approach are implemented in Pytorch with the DL research toolbox {\bf{MarverToolbox}}. Note that the source codes are open and available at~\cite{Xie2019}, and {\bf{MarverToolbox}} is an open-source toolbox developed on our own for GPU acceleration of complex tensor computations and facilitating DL communication research, and is available at~\cite{Xie2019a}.

\subsection{Experimental Setup}

\begin{figure*}[t]
	\centering
	\subfigure[Closed test set: Known devices/no device aging.]{
        \centering
        \begin{minipage}[t]{0.32\linewidth}
        \centering
        \includegraphics[width=\linewidth]{\rootpath/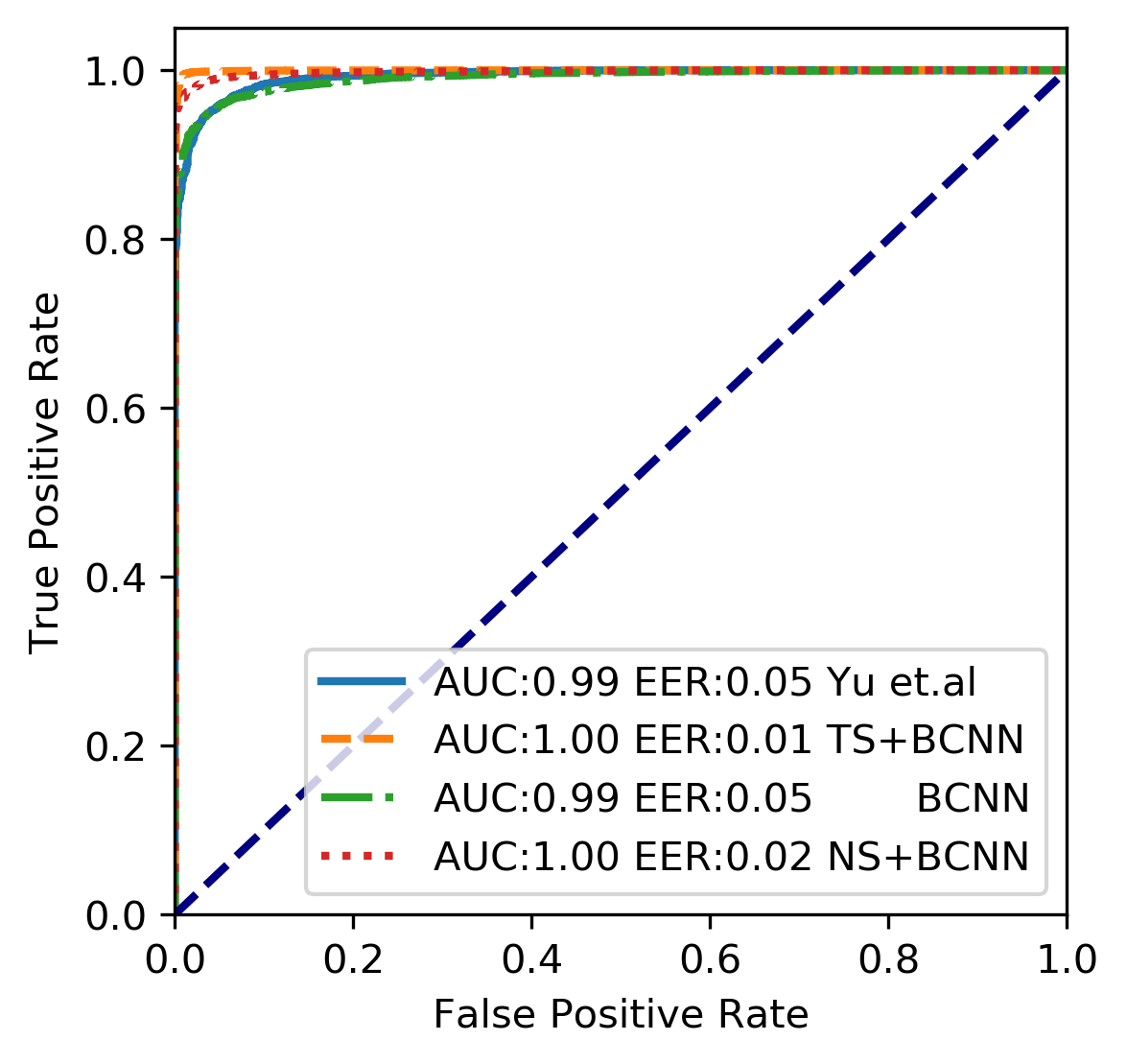}
        \end{minipage}
    }%
    \subfigure[Open 2: Known devices/device aging.]{
        \centering
        \begin{minipage}[t]{0.32\linewidth}
        \centering
        \includegraphics[width=\linewidth]{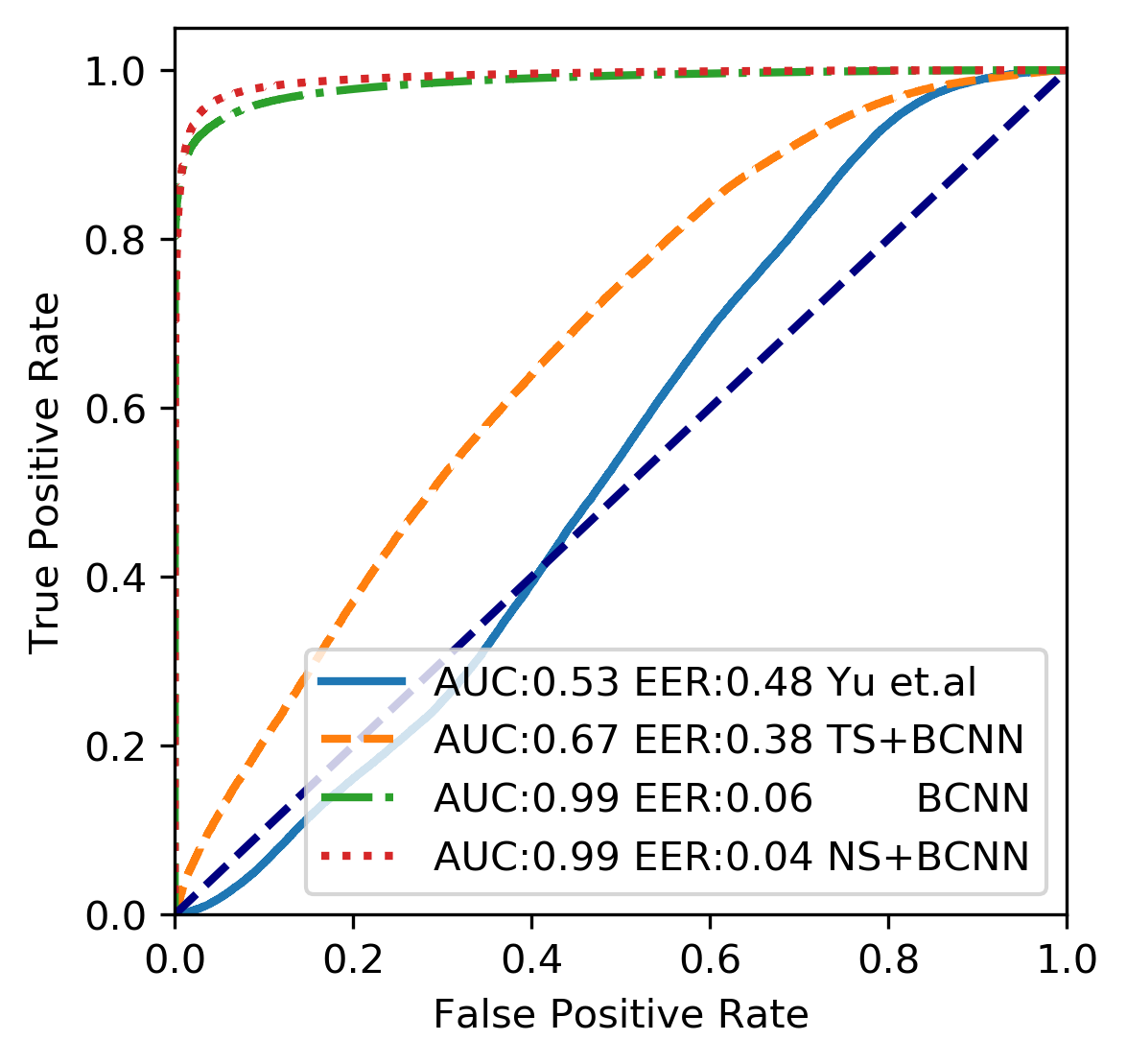}
        \end{minipage}
    }%
    \subfigure[Open 2-3: Known devices/device aging.]{
        \centering
        \begin{minipage}[t]{0.32\linewidth}
        \centering
        \includegraphics[width=\linewidth]{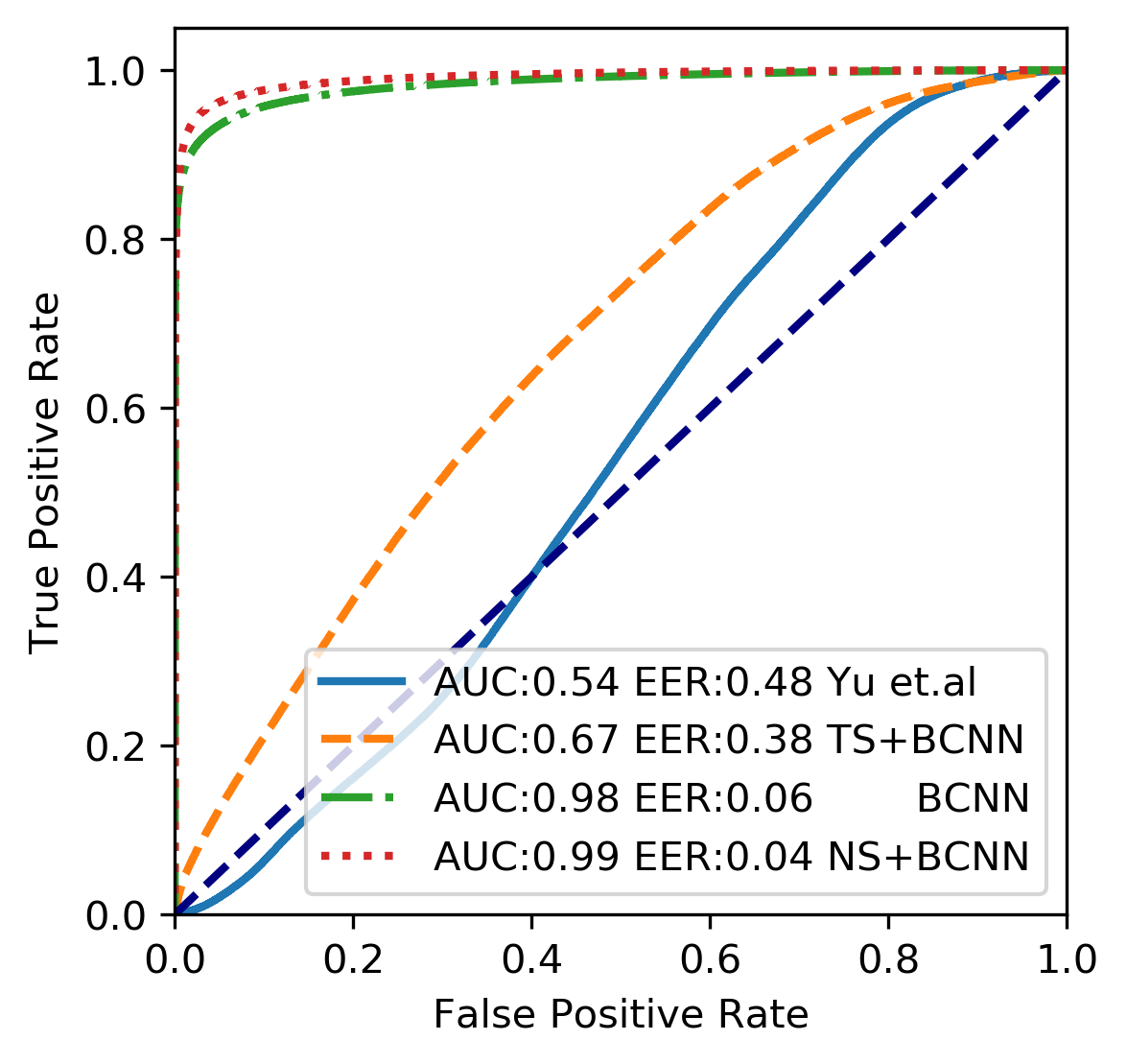}
        \end{minipage}
    }\\
    \subfigure[Open 1: Unknown devices/no device aging.]{
        \centering
        \begin{minipage}[t]{0.32\linewidth}
        \centering
        \includegraphics[width=\linewidth]{\rootpath/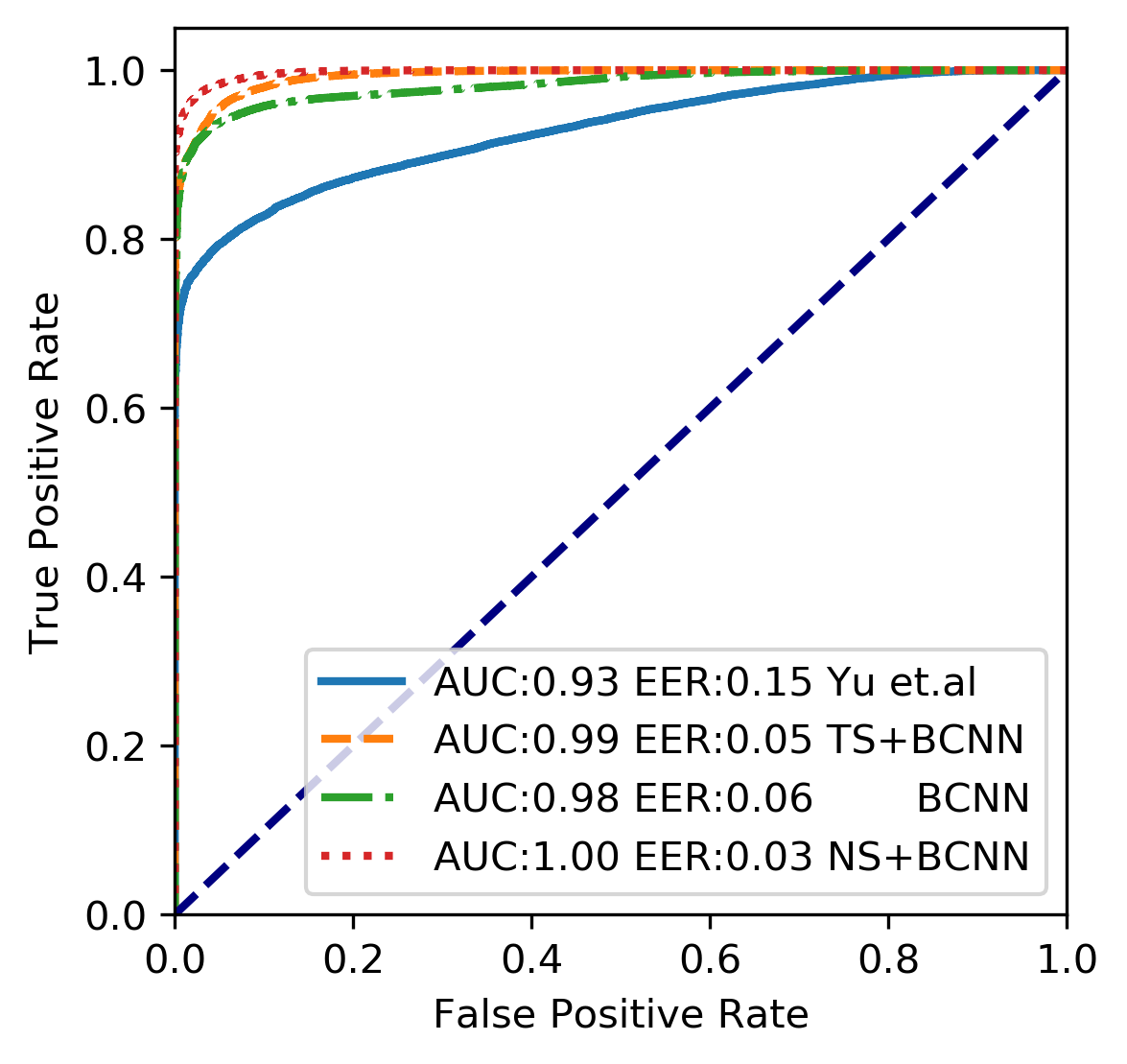}
        \end{minipage}
    }%
    \subfigure[Open 4: Unknown devices/device aging.]{
        \centering
        \begin{minipage}[t]{0.32\linewidth}
        \centering
        \includegraphics[width=\linewidth]{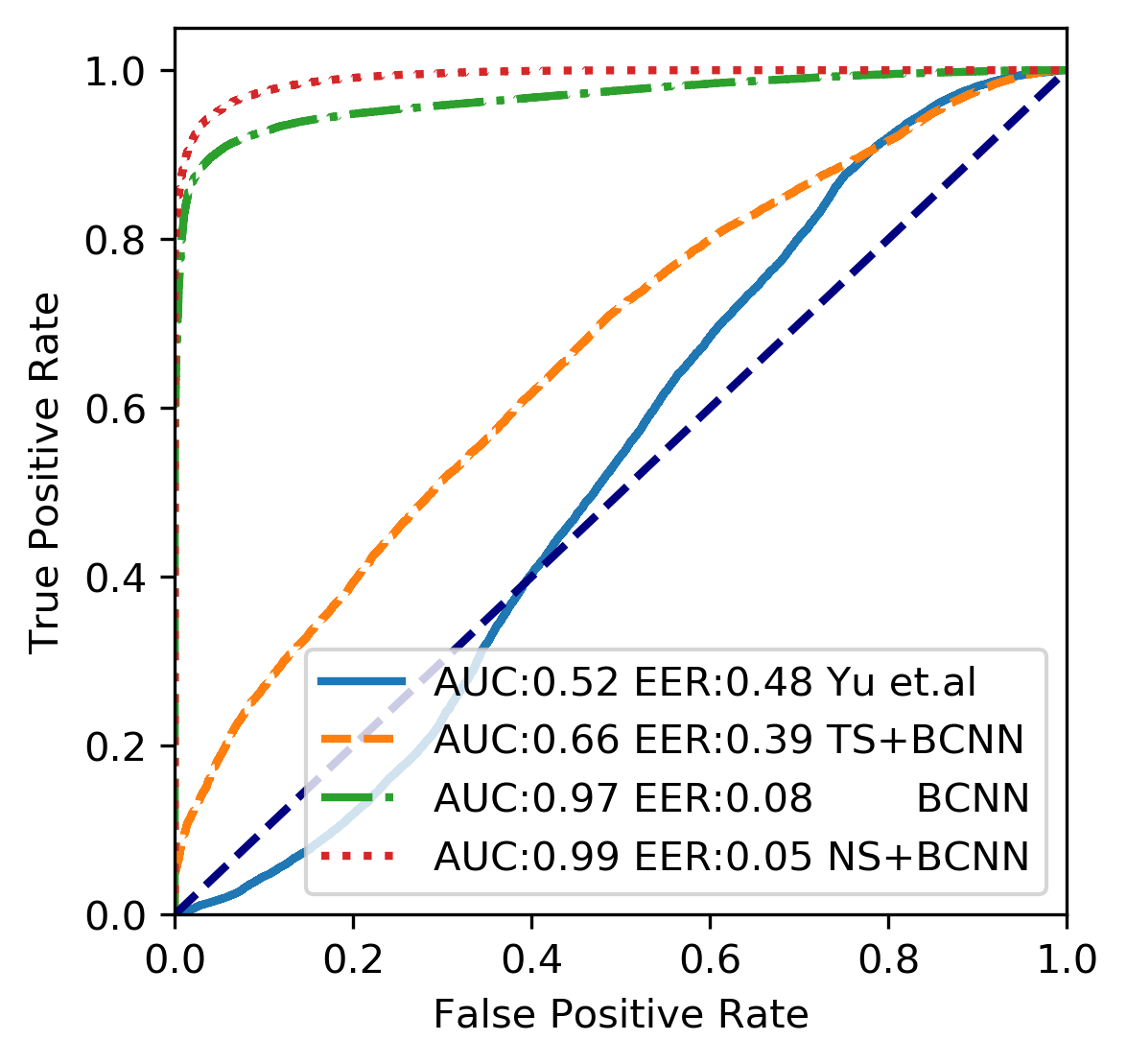}
        \end{minipage}
    }%
    \subfigure[Open 4-5: Unknown devices/device aging.]{
        \centering
        \begin{minipage}[t]{0.32\linewidth}
        \centering
        \includegraphics[width=\linewidth]{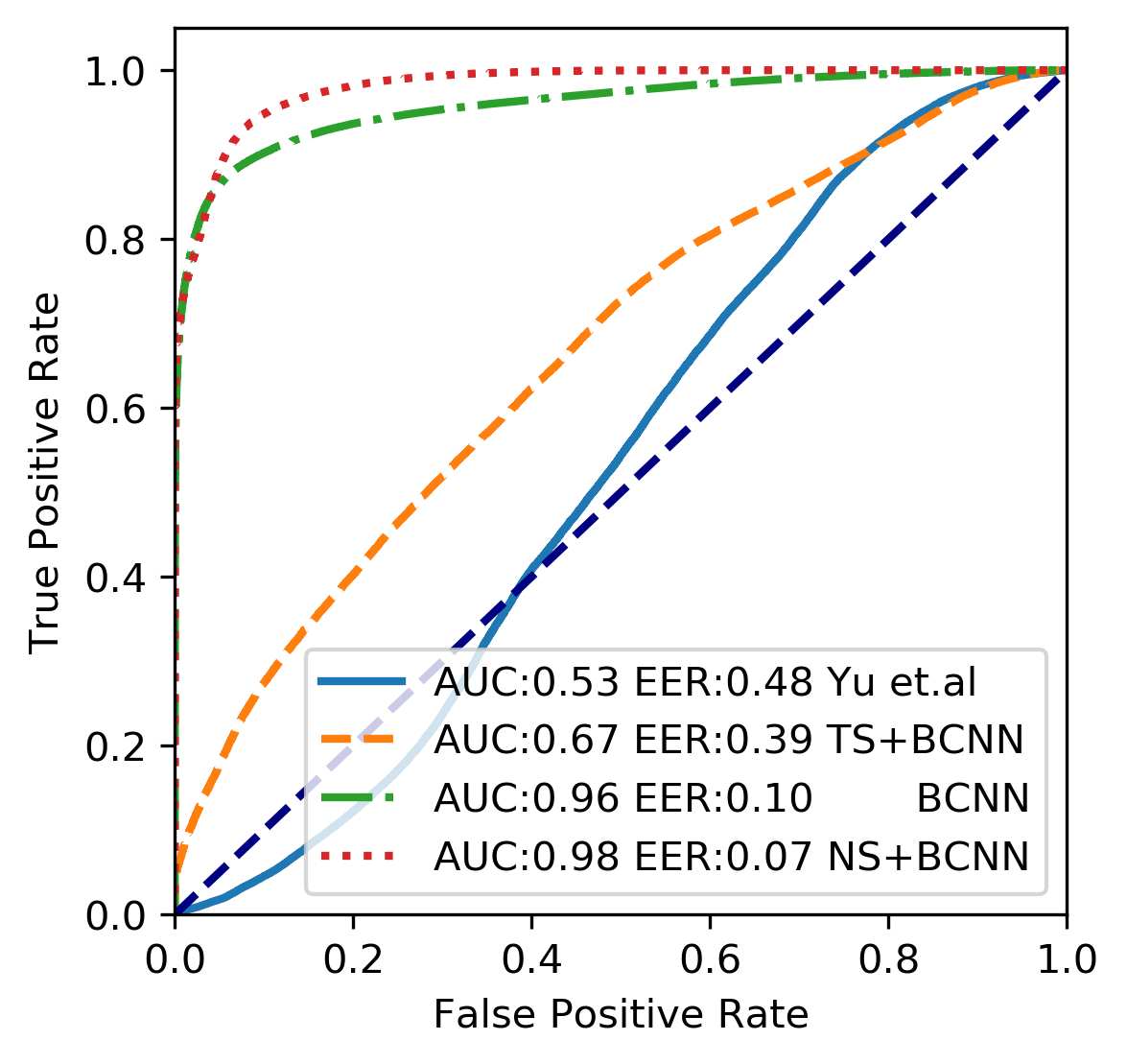}
        \end{minipage}
    }\\
	\caption{ROC curves of different methods under closed and open test sets (SNR = 30 \bflag{dB}).}
	\label{fig:exp0}
	
\end{figure*}

\paragraph{Dataset}We collected data from 54 TI CC2530 ZigBee devices using a USRP N210 as the receiver \bflag{and IEEE 802.15.4 as the physical layer standard.} The ZigBee devices transmitted at a maximum power of 19 dBm and were located within one meter from the receiver. \bflag{The experimental system operates at 2.4 GHz frequency band with a USRP sampling rate of 10 Msample/s. Each preamble signal contains 1280 sample points (i.e., the dimension of the data set), and the energy is normalized to one unit. All the data sets are collected in a real demo testbed, thus the received data are obtained with inevitable and practical noise level~($\text{SNR} \approx 30$ dB).}

As shown in Fig.~\ref{fig:dataset}, the dataset consists of 8 sample blocks. Blocks 1-5 were sampled within the same day~(without device aging\footnote{\bflag{The term ``device aging'' here refers to the degradation of the device performance over time. The devices we used for collecting data had been operating for 18 months without interruption. }}), while blocks 6-9 were sampled after 18 months~(with device aging). Among them, blocks 6-7 were sampled on the same day, and the blocks 8-9 were sampled on another day.
\bflag{The extended data collection interval ensures data independence and helps better verify the generalizability and robustness of the proposed NS framework.}

We split the whole dataset into seven parts for a comprehensive comparison. Except for the training set, we list the six test sets in order of classification difficulty from easy to hard:
\begin{itemize}
	\item {\bf Closed test set}: known devices without device aging, all conditions identical to the training set;
	\item {\bf Open 1}: unknown devices without device aging, all conditions identical to the training set;
	\item {\bf Open 2}: known devices with unknown device aging, collected 18 months later;
	\item {\bf Open 4}: unknown devices with unknown device aging, collected 18 months later;
	\item {\bf Open 2-3}: known devices with two types of unknown device aging, collected 18 months later;
	\item {\bf Open 4-5}: unknown devices with two types of unknown device aging, collected 18 months laters.
\end{itemize}
The Closed test set is the simplest known devices test set that with channel conditions similar to the training set. In contrast, Open 4-5 is the most challenging test set with unknown devices and different unknown device aging~(sampled at different dates).

\paragraph{Baselines and the Proposed NS-based RFF} We consider three types of baselines classified according to whether they are model-driven, purely data-driven, or mode-and-data-driven. As shown in Table \ref{tb:baseline}:
	\begin{itemize}
		\item {\bf M}: model-driven methods, which use handcrafted operations for the signal preprocessing, with different RFF extractors and auxiliary linear classifier back-ends.
		\item {\bf D}: data-driven methods that use only neural networks to extract the RFF.
		\item {\bf M\&D}: the proposed methods that combines signal processing priors with learnable parameters. 
	\end{itemize}
	
In Table~\ref{tb:baseline}, TS refers to the traditional synchronization, and NS indicates that the design uses the proposed neural synchronization. We also denote ``HP'' as the proposed hyperspherical projection. Except for the model in~\cite{yu2019multi} which has 63 million parameters, the number of parameters for the other methods is restricted to 12 million. \bflag{All models are trained using  the training data set shown in Fig. \ref{fig:dataset} for 150 epochs using the Adam optimizer~\cite{kingma2014adam} with a learning rate of $10^{-4}$ ($\beta_1=0.5, \beta_2=0.99$). The codes for reproducing our experiments are available at https://github.com/xrj-com/NS-RFF.}

\paragraph{Metric} 
Similar to biometric identification systems, we use the receiver operating characteristic~(ROC) curve, area under the ROC curve~(AUC), and the equal error rate~(EER) as the metrics to evaluate the quality of the extracted RFF. The ROC curve is obtained by plotting the true positive rate~(TPR) against the false positive rate (FPR) at various thresholds $T$~\cite{fawcett2006introduction}. Given the true positive~(TP), the true negative~(TN), the false positive~(FP), and the false negative~(FN) rates, TPR and FPR are respectively defined as:

\begin{equation}
  \text{TPR} = \frac{\text{TP}}{\text{TP} + \text{FN}},\quad \text{FPR} = \frac{\text{FN}}{\text{FP} + \text{TN}}.
\end{equation}

The TPR is also known as the probability of detection, defining how many correct positive samples~(intra-class) occur among all positive samples from the test. 
Here, positive samples refer to the signal pairs from the same device (similarly, negative samples refer to the pairs from different devices.)
FPR is also known as the probability of false alarm and is the percentage of correct negative samples~(inter-class) to the total negative samples. The ROC curve depicts the relative trade-offs between TPR and FPR. 
The EER refers to the point where FNR and FPR are equal; here FNR = 1-TPR. 

The higher AUC and the lower EER mean that the ROC curve is closer to the top left corner, which is recognized as ``perfect classification". This means that we simultaneously achieve both fewer false negatives and fewer false positives.
\begin{figure*}[t]
	\centering
	\subfigure[TS-based RFF.]{
        \centering
        \begin{minipage}[t]{0.332\linewidth}
        \centering
        \includegraphics[width=\linewidth]{\rootpath/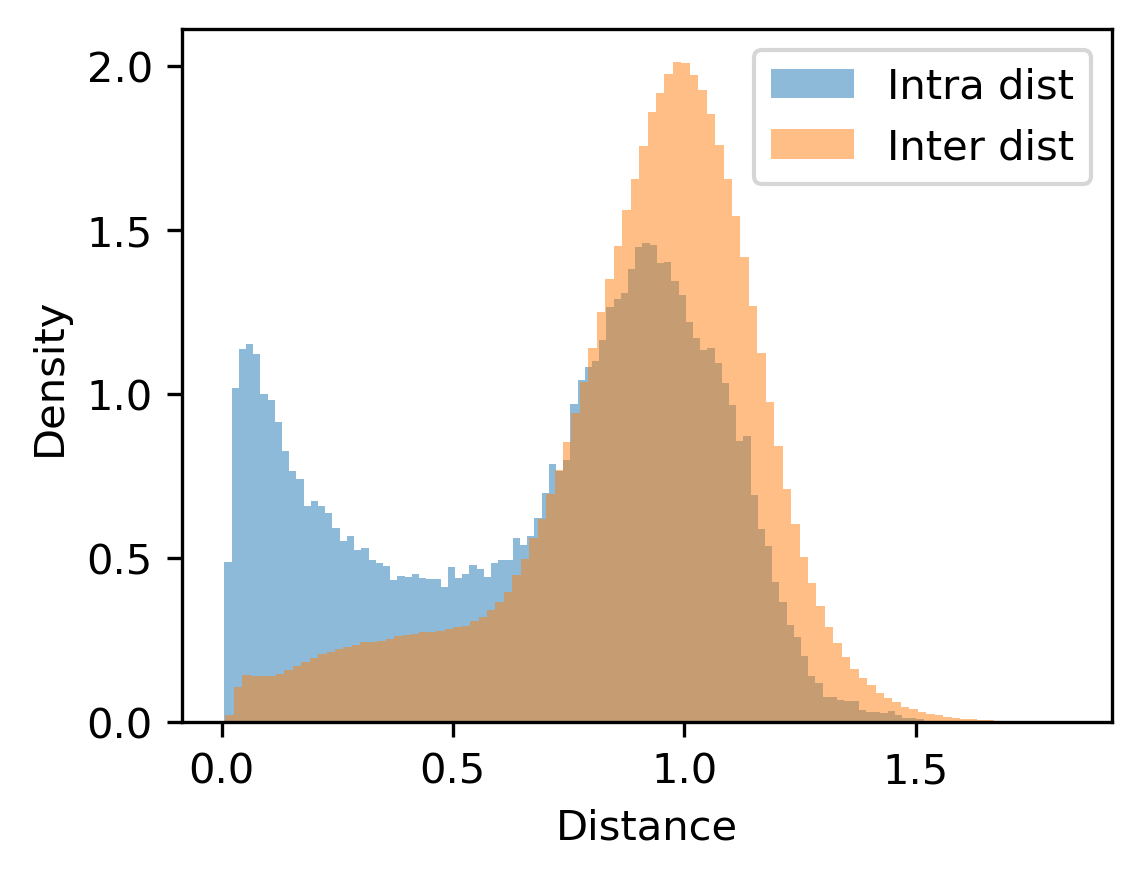}
        \end{minipage}
    }%
    \subfigure[DL-based RFF.]{
        \centering
        \begin{minipage}[t]{0.32\linewidth}
        \centering
        \includegraphics[width=\linewidth]{\rootpath/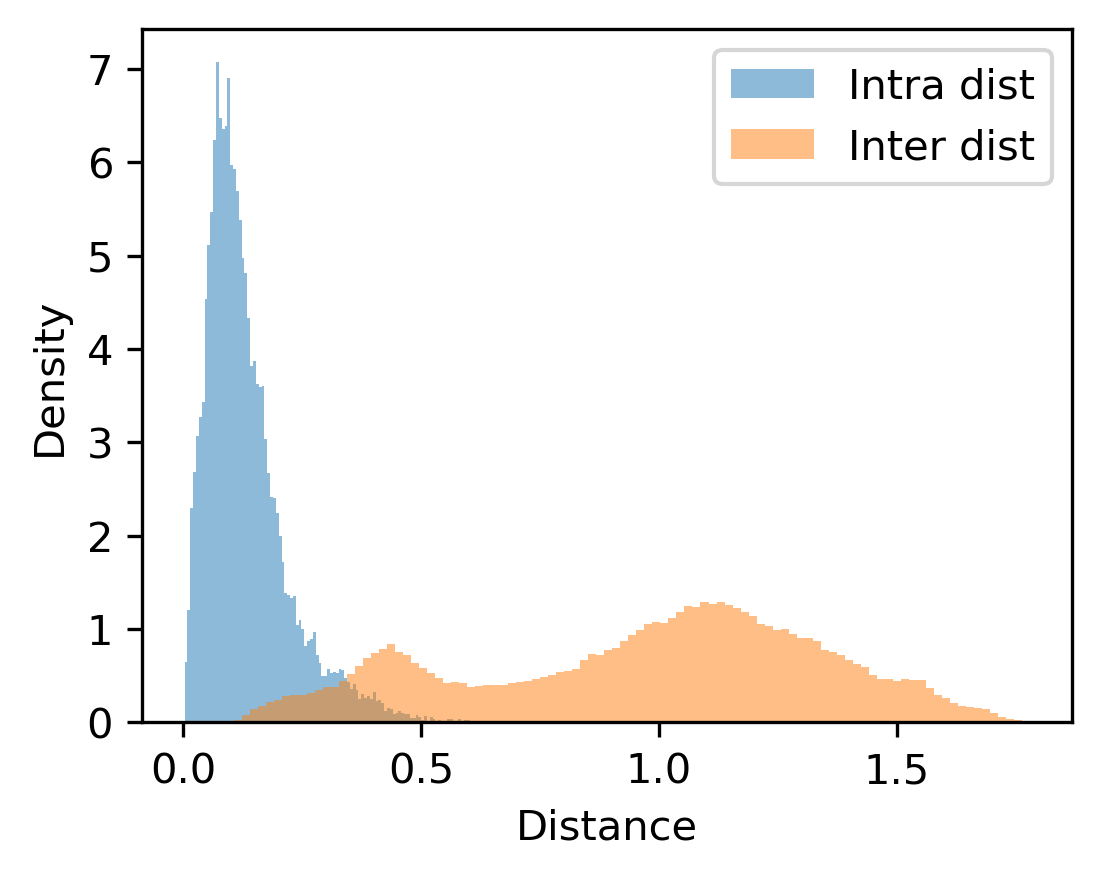}
        \end{minipage}
    }%
    \subfigure[NS-based RFF.]{
        \centering
        \begin{minipage}[t]{0.335\linewidth}
        \centering
        \includegraphics[width=\linewidth]{\rootpath/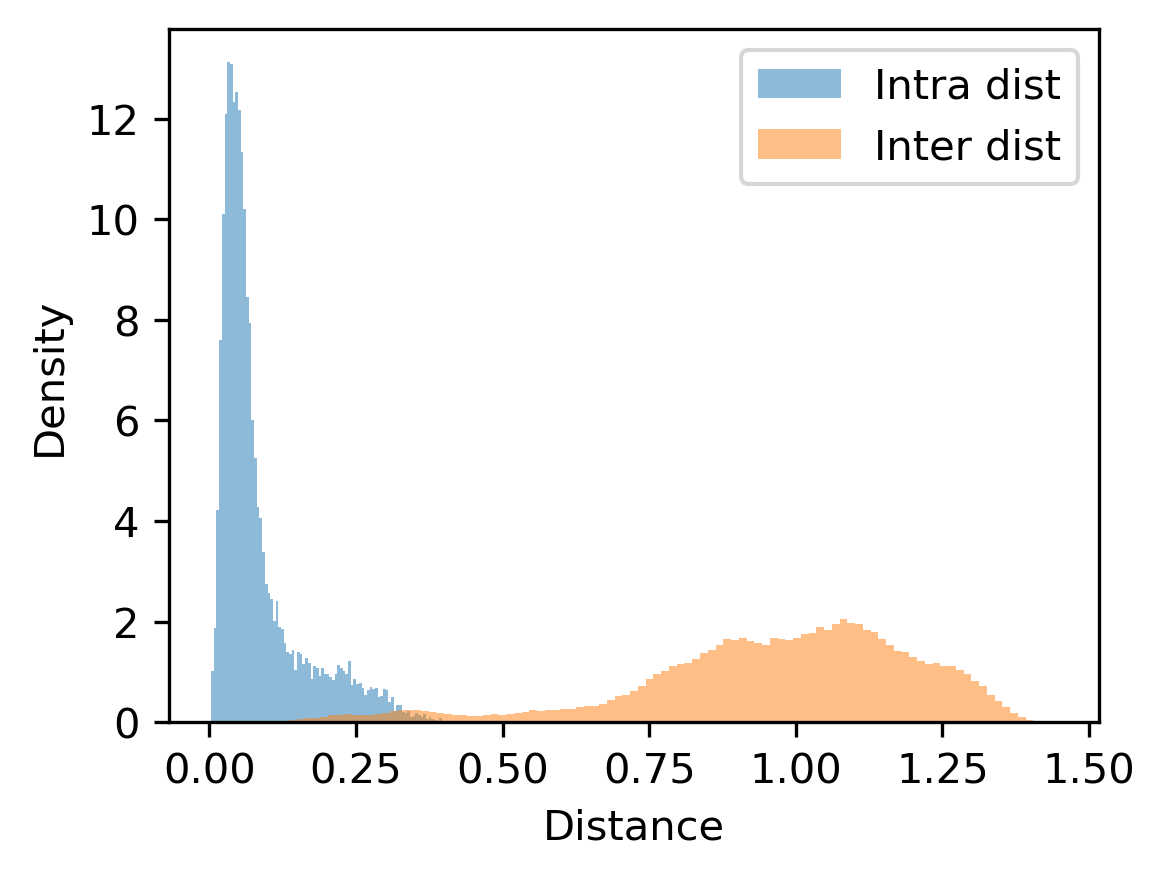}
        \end{minipage}
    }
	\caption{\bflag{Comparison via signal visualization (SNR = 30 dB). ``Intra dist'' indicates the distance distribution among the RFFs which came from the same device. Similarly, ``Inter dist'' is the distance distribution obtained from different devices.}}
	\label{fig:dist}
\end{figure*}

\begin{table*}[htbp]  

\caption{ROC Comparison of Different Methods. Model-driven~(M), Data-driven RFF~(D), and Model\&data-Driven~(M\&D).}

\centering
\begin{threeparttable}[b]
\begin{tabular}{llccccccccccc}  

\toprule   
\multirow{2}*{Types} & \multirow{2}*{Methods} &\multicolumn{2}{c}{Open 2}&\multicolumn{2}{c}{Open 2-3} &\multicolumn{2}{c}{Open 4}&\multicolumn{2}{c}{Open 4-5}\\
		~ & ~& AUC & EER & AUC & EER & AUC & EER& AUC & EER \\
\midrule
M & Yu et.al &0.5345 & 0.4828& 0.5374& 0.4807 & 0.5234 & 0.4822 & 0.5259 & 0.4820\\
M & TS + BCNN\tnote{\dag} &0.6717 & 0.3818 & 0.6699& 0.3824& 0.6619 & 0.3911 & 0.6653 & 0.3890\\
M &  TS + BCNN\tnote{\dag}~~(HP\tnote{\dag} ) &0.6078 & 0.4215 & 0.6072 & 0.4204 & 0.6085 & 0.4172 & 0.6136 & 0.4150\\
\midrule
D &  BCNN\tnote{\dag} & 0.9837 & 0.0593 & 0.9837& 0.0593 & 0.9669 & 0.0794 & 0.9590 & 0.0986\\
D &  BCNN\tnote{\dag}~~(HP\tnote{\dag} ) & 0.9933 & 0.0329 & 0.9915 & 0.0376 & 0.9649 & 0.0850 & 0.9555 & 0.1068\\ 
\midrule
M\&D &  NS\tnote{\dag}~~+ BCNN\tnote{\dag} & 0.9912 & 0.0400 & 0.9908 & 0.0418 & 0.9916 & 0.0487 & 0.9808 & 0.0739\\
M\&D & NS\tnote{\dag}~~+ BCNN\tnote{\dag}~~(HP\tnote{\dag} ) & {\bf 0.9990} & {\bf 0.0120} &{\bf 0.9976}&{\bf  0.0212} &{\bf 0.9984 }& {\bf 0.0197}& {\bf 0.9923} &{\bf 0.0456}\\ 

\bottomrule  

\end{tabular}
\begin{tablenotes}
     \item[\dag] Proposed in this paper.
\end{tablenotes}
\end{threeparttable}
\label{tb:resluts}
\end{table*}

\subsection{Performance Under Closed and Open Sets Settings}

To validate the superiority of the proposed model-and-data framework, we compare its performance with existing methods. For this purpose, we plot the ROC curves for the proposed method and the baselines for both closed-set and open-set settings, and we compare  the EER in Table~\ref{tb:resluts}. All results are measured on the test set. 

\paragraph{Power of End-to-End Learning} Overall, end-to-end methods (BCNN, NS+BCNN) outperform their non-end-to-end counterparts (Yu et.al\cite{yu2019robust}, TS+BCNN) by a large margin in all cases. These results verify our hypothesis: traditional preprocessing steps like carrier synchronization used in these methods does result in the loss of information about the device identities. We further discover that on cases with different sampled channels, non-end-to-end methods perform no better than random guesses (see e.g.~Fig.~\ref{fig:exp0}(b) to Fig.~\ref{fig:exp0}(d)). This suggests that these TS-based methods may indeed rely more on channel distinctions rather than hardware imperfection to distinguish devices. The use of deep neural networks, unfortunately, does not help to solve this problem. The result fully demonstrates the sub-optimality of TS for open-set RFF authentication and highlights the need for end-to-end learning.

\paragraph{Power of Signal Processing Priors} To investigate the usefulness of the proposed NS module, we compare the performance of the two end-to-end methods, namely BCNN (which directly learns the RFF from raw signals) and BCNN + NS (which preprocesses the signal with the NS module). We discover that BCNN with NS significantly outperforms BCNN without NS, as evidenced by both the ROC curve in Fig.~\ref{fig:exp0} and the EER values in Table II. This is especially the case for highly open-set scenarios (e.g., for unknown device+device aging, as shown in Fig.~\ref{fig:exp0}(e)(f)) where the difference between the ROC curves of the two methods are enlarged. This confirms the advantage of the inductive bias brought by the proposed NS module and demonstrates its necessity for open-set RFF authentication.

\paragraph{Power of Hypersphere Representation} In this section we further investigate the usefulness of the proposed hypersphere representation. We do this by applying the HP operation to \emph{all} methods. The results are presented in Table II. Interestingly, we find that while HP significantly improves the performance of the proposed model-and-data driven approaches, it deteriorates the performance of both model- and data-driven methods. We conjecture the underlying reason may be that HP is designed to encourage the separation between different RFFs in \emph{training set}, and when the RFFs learned from the training set can not generalized to the test set, this encouragement instead causes overfitting. In other words, only those methods that can extract highly generalizable RFFs will have good affinity for HP. This interesting result also serves as indirect evidence that the signal processing priors used in the NS module indeed helps for learning high-quality RFFs that are better generalized to unseen data.

\begin{figure*}[t]
	\centering
	\subfigure[Closed test set: Known devices/no device aging.]{
        \centering
        \begin{minipage}[t]{0.33\linewidth}
        \centering
        \includegraphics[width=\linewidth]{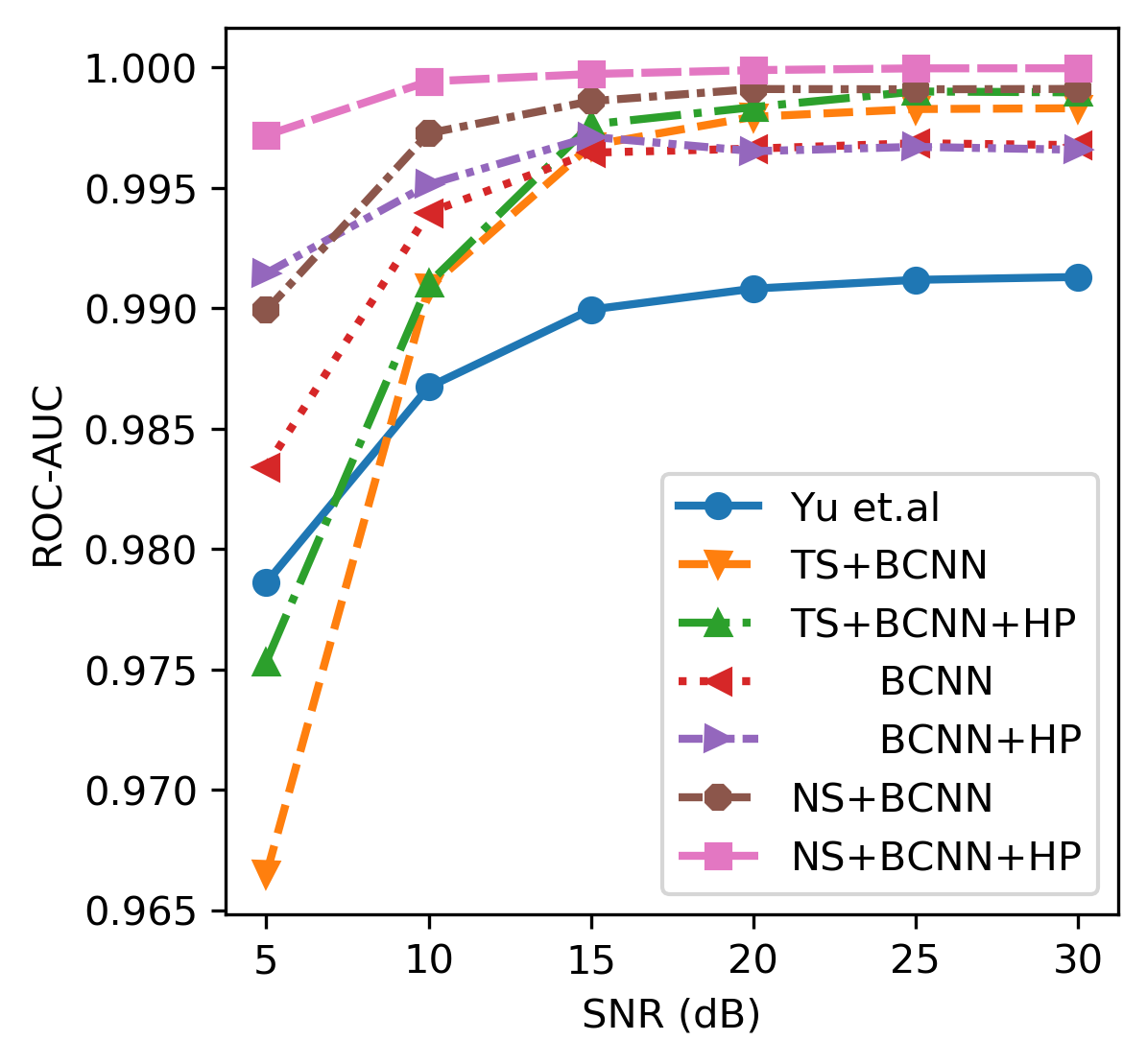}
        \end{minipage}
    }%
    \subfigure[Open 2: Known devices/device aging.]{
        \centering
        \begin{minipage}[t]{0.32\linewidth}
        \centering
        \includegraphics[width=\linewidth]{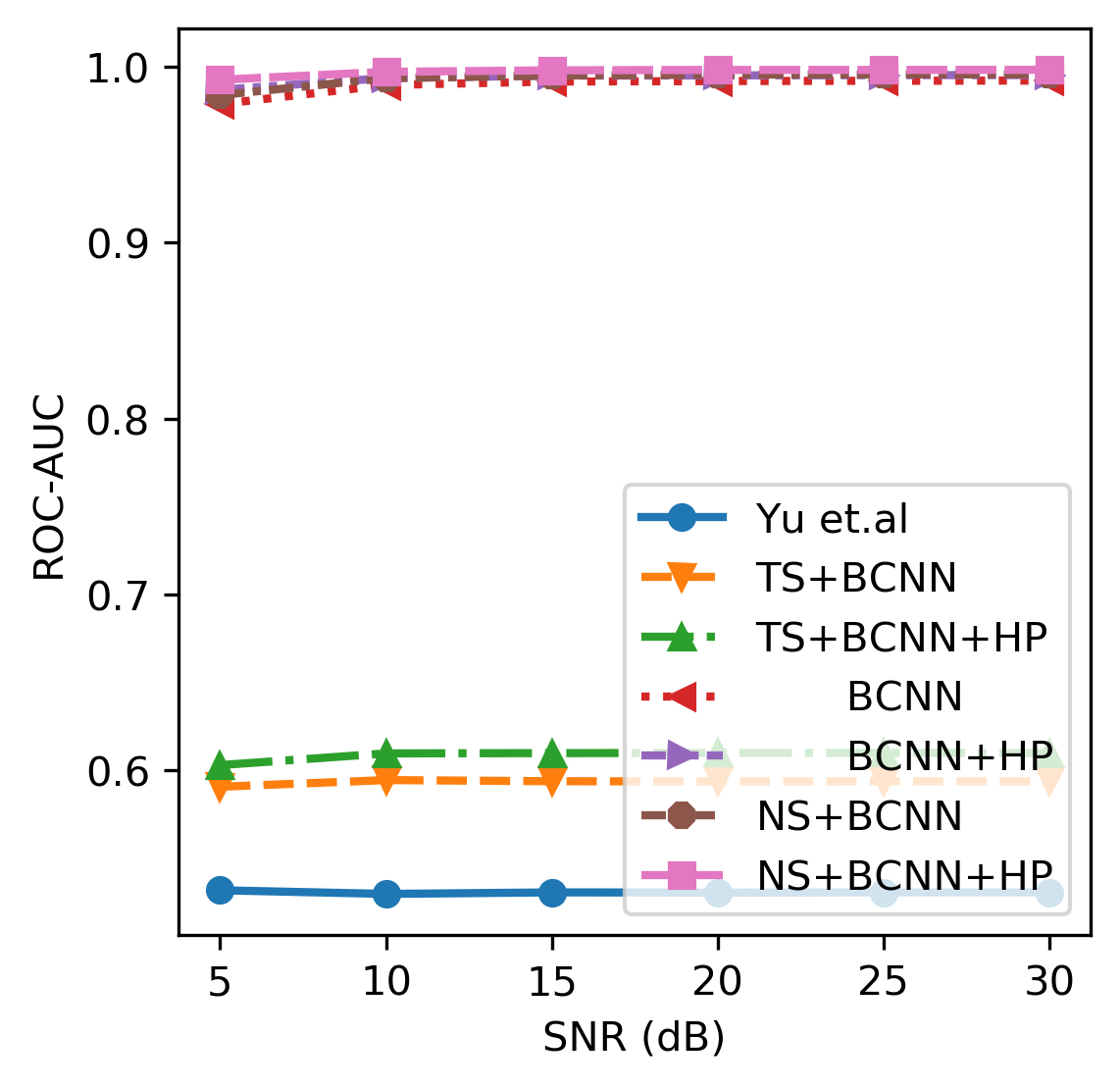}
        \end{minipage}
    }%
    \subfigure[Open 2-3: Known devices/device aging.]{
        \centering
        \begin{minipage}[t]{0.32\linewidth}
        \centering
        \includegraphics[width=\linewidth]{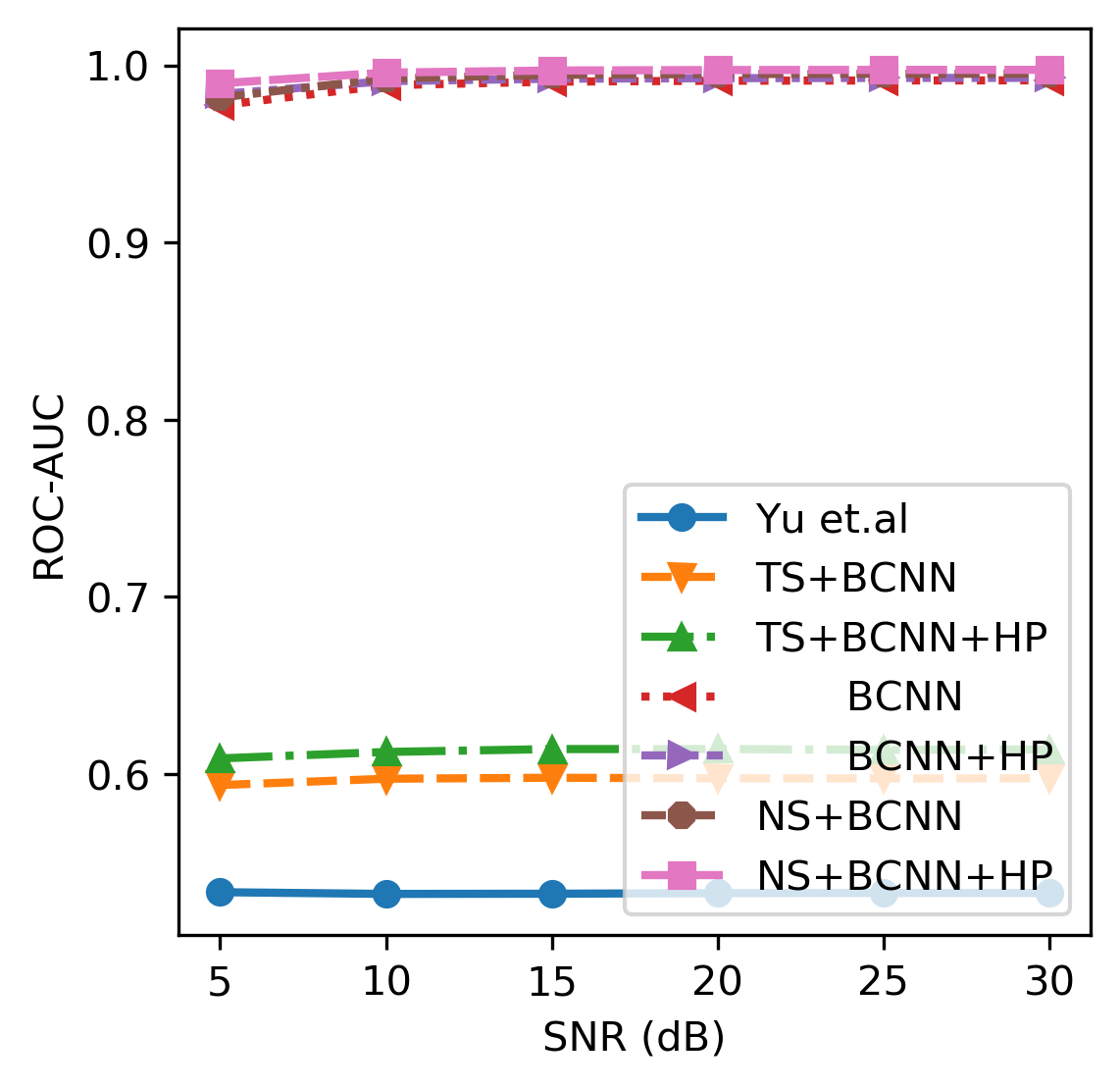}
        \end{minipage}
    }\\
    \subfigure[Open 1: Unknown devices/no device aging.]{
        \centering
        \begin{minipage}[t]{0.33\linewidth}
        \centering
        \includegraphics[width=\linewidth]{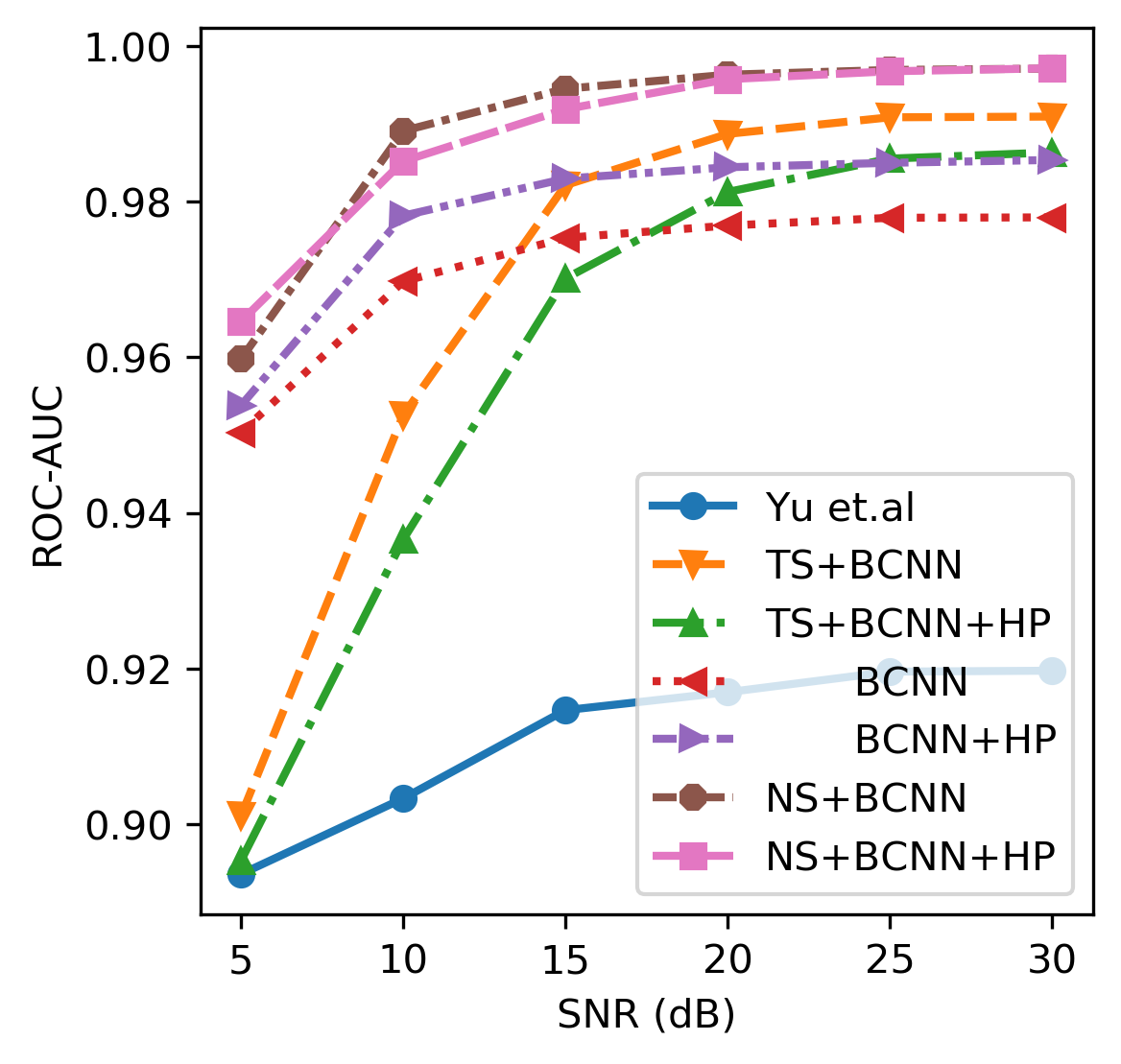}
        \end{minipage}
    }%
    \subfigure[Open 4: Unknown devices/device aging.]{
        \centering
        \begin{minipage}[t]{0.32\linewidth}
        \centering
        \includegraphics[width=\linewidth]{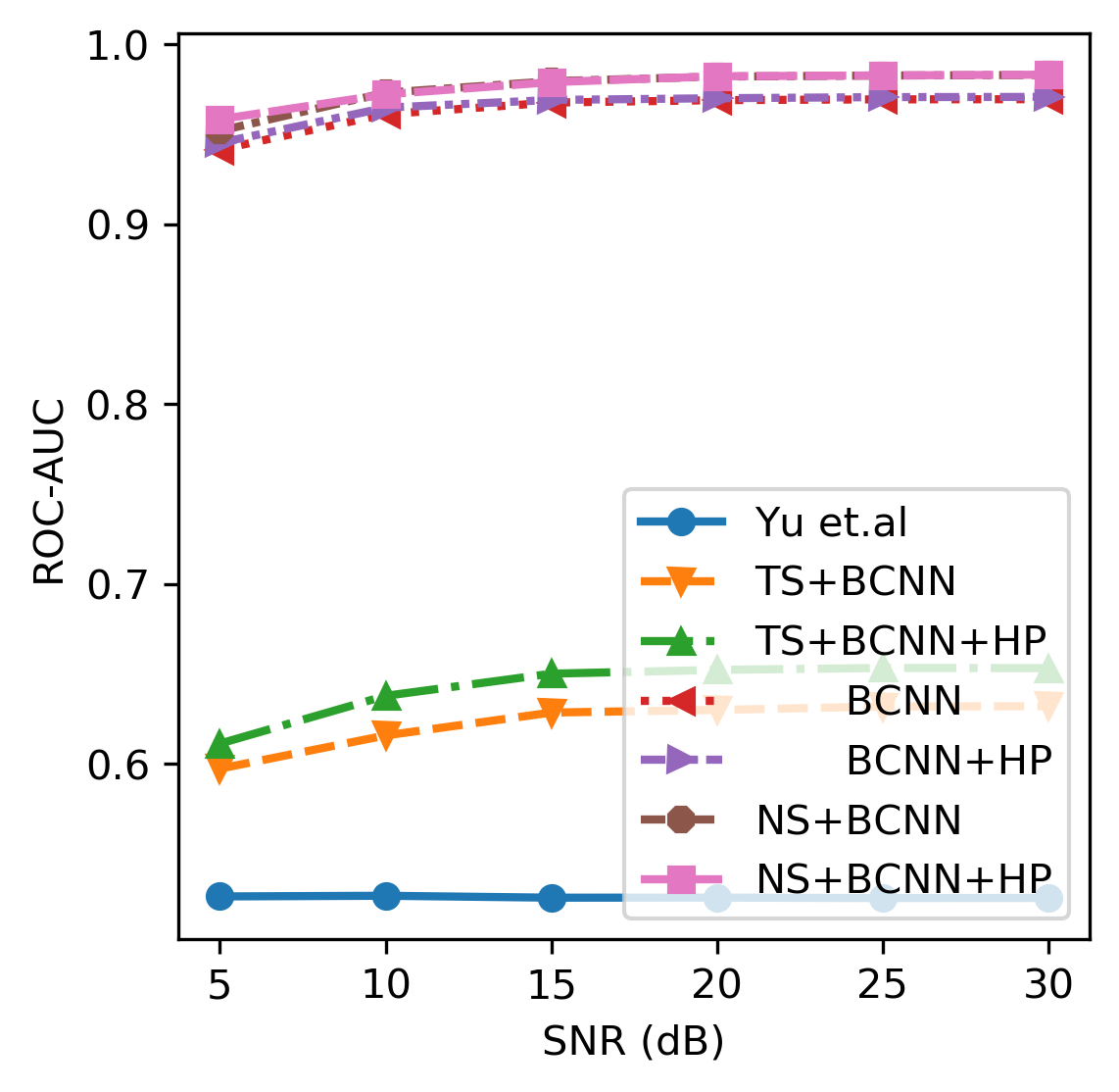}
        \end{minipage}
    }%
    \subfigure[Open 4-5: Unknown devices/device aging.]{
        \centering
        \begin{minipage}[t]{0.32\linewidth}
        \centering
        \includegraphics[width=\linewidth]{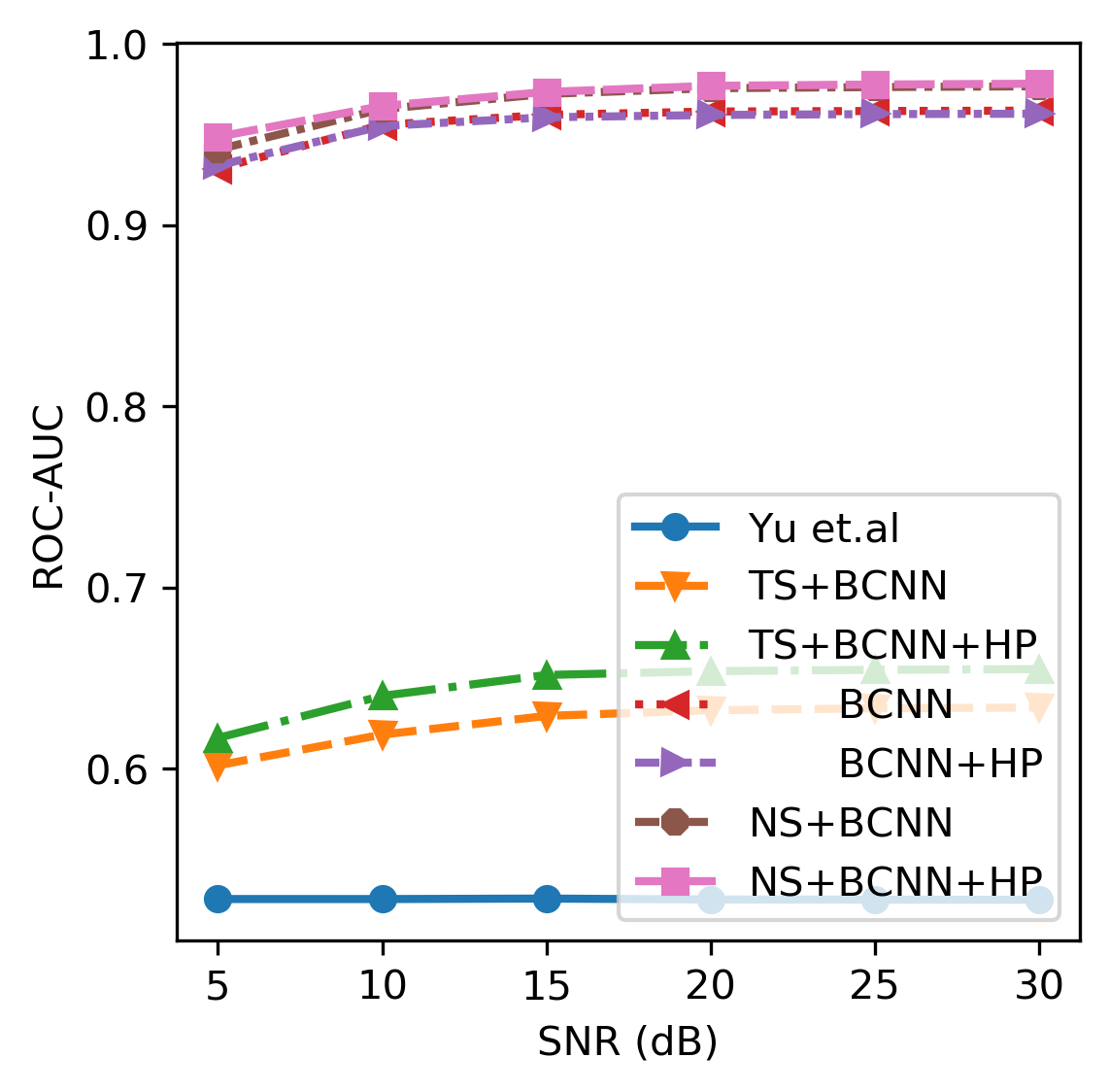}
        \end{minipage}
    }\\
	\caption{AUC-SNR curve of different methods under close and open test sets. }
	\label{fig:exp1}
\end{figure*}

\subsection{Visualizations}

\paragraph{Visualization of the Distance Distribution} To better provide intuition about the superiority of the proposed method, we present histograms of the distances between intra- and inter-device RFFs in Fig.~\ref{fig:dist}. These distances are calculated on the Open 4 dataset, which is a mixture of unknown and aged devices. It can be seen from Fig.~\ref{fig:dist} that intra-device and inter-device distances are nearly indistinguishable from each other for the TS-based RFF, which explains its poor performance on this dataset. Although the situation is much better in DL-based RFF, the overlap between intra- and inter-device distances is still evident in this method. Unlike these two methods, the distributions of the intra-device and inter-device distances for the NS-based method are well-separated, meaning that one can easily distinguish one device from the others. This illustrates why the proposed framework can achieve satisfactory performance in even open-set settings. 

\paragraph{Visualization of the Frequency and Phase Offsets} \bflag{
Fig. \ref{fig:visoffset}(a) and Fig. \ref{fig:visoffset}(b) show the scatter plots comparing the offsets estimated by TS and the proposed NS, respectively. We use different colors to indicate the device identity. Each point in Fig.  \ref{fig:visoffset} represents the frequency and phase pairs, i.e., ($\omega_{\text{TS}}$, $\phi_{\text{TS}}$). }

\bflag{
In Fig. \ref{fig:visoffset}(a), the offsets from the same device tend to be clustered over a small frequency range and can be separated by a linear classifier. This implies that the offsets from TS have a certain degree of correlation with the device identities. In fact, some device-dependent information can be lost if $\omega_{\text{TS}}$ and $\phi_{\text{TS}}$ are estimated and then removed from the received signals before the RFF extraction. Therefore, the RFF discrimination performance is weakened by using TS. 
}
\bflag{
In contrast, as shown in Fig. \ref{fig:visoffset}(b), the ($\omega_{\text{NS}}$, $\phi_{\text{NS}}$) from the proposed NS are randomly distributed on the plane. This means that the estimates $\omega_{\text{NS}}$ and $\phi_{\text{NS}}$ obtained by the proposed NS approach exhibit little dependence on the device identity. Thus, this suggests that NS removes device-irrelevant information from the input signals, and device-relevant information is better retained for subsequent RFF extraction. }

\subsection{Performance Versus SNR}

In this subsection, we further investigate the robustness of the proposed method with respect to noise. For this purpose, we artificially add random noise with different signal-to-noise ratio (SNR) to the input signal and investigate how the performance varies with SNR. Noise levels of SNR=$\{5,10,15,20,25,30\}$~dB are considered. \bflag{We also retrain all the models by data augmentation with random SNRs from 5 to 30 dB.}
The results are presented in Fig.~\ref{fig:exp1}. 

Again, we see that end-to-end learning methods are much more robust to noise than preprocessing-based methods even in closed-set and weakly open-set (i.e.~cases without  device aging) settings. The reason why traditional preprocessing methods are not robust may be due to their inability to achieve high mutual information $I(F(\vecr); \vecy)$ (here $F$ is the routine for computing the RFF), as we previously analyzed. This low mutual information will cause the system to be sensitive to injected noise --- an issue familiar to the communication community. End-to-end learning methods, by contrast, can better attain high mutual information between the extracted RFF and device identity, and hence are much less sensitive to noise.   

It is worth highlighing that the performance drop due to injected noise in TS-based methods is not comparable to that due to a changed channel (i.e.,~device aging). This again confirms that TS-based methods tend to overfit due to variations in the channels.

\begin{figure*}[t]
	\centering
	\subfigure[Closed test set: Known devices/no device aging.]{
        \centering
        \begin{minipage}[t]{0.32\linewidth}
        \centering
        \includegraphics[width=\linewidth]{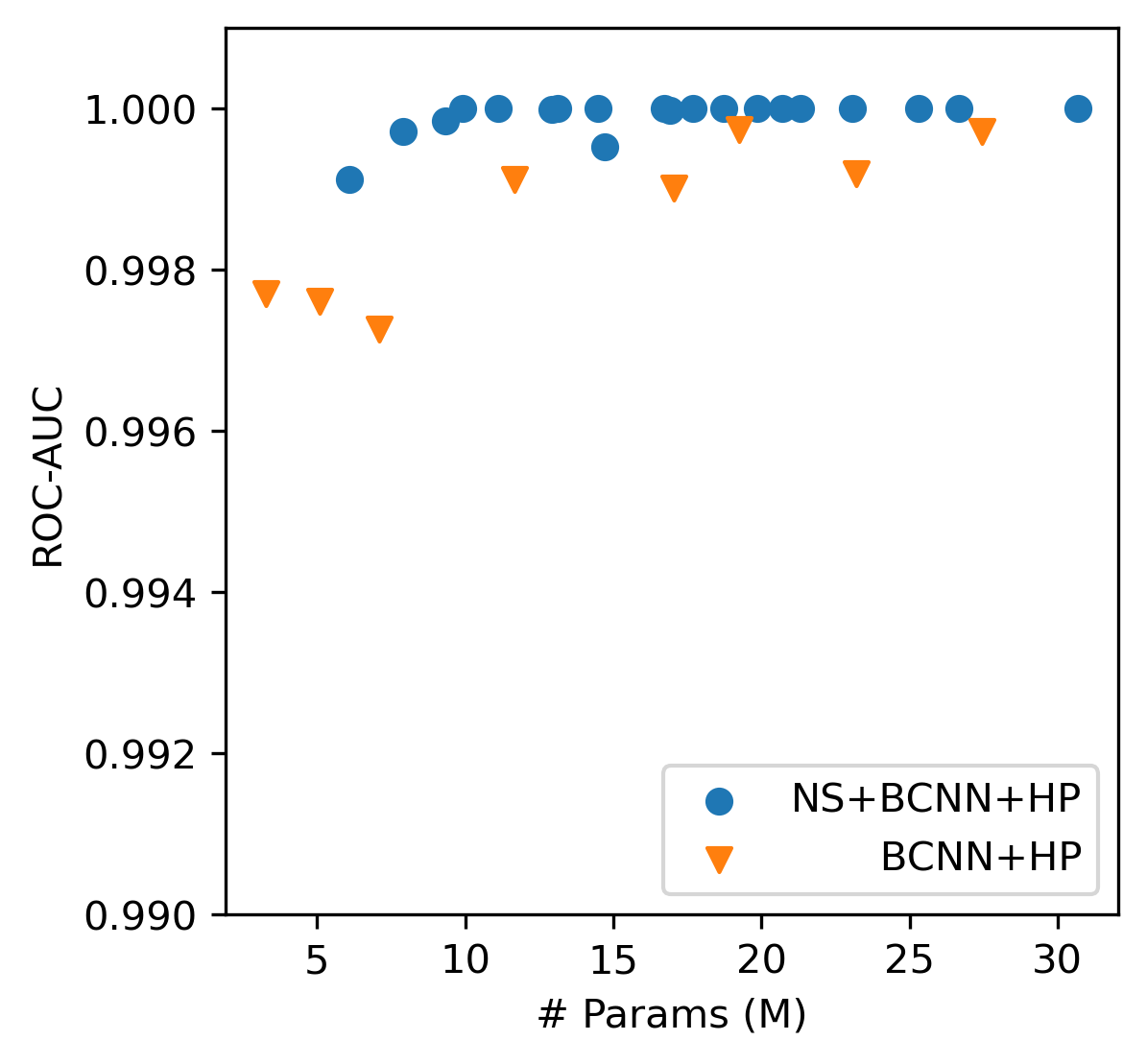}
        \end{minipage}
    }%
    \subfigure[Open 2: Known devices/device aging.]{
        \centering
        \begin{minipage}[t]{0.32\linewidth}
        \centering
        \includegraphics[width=\linewidth]{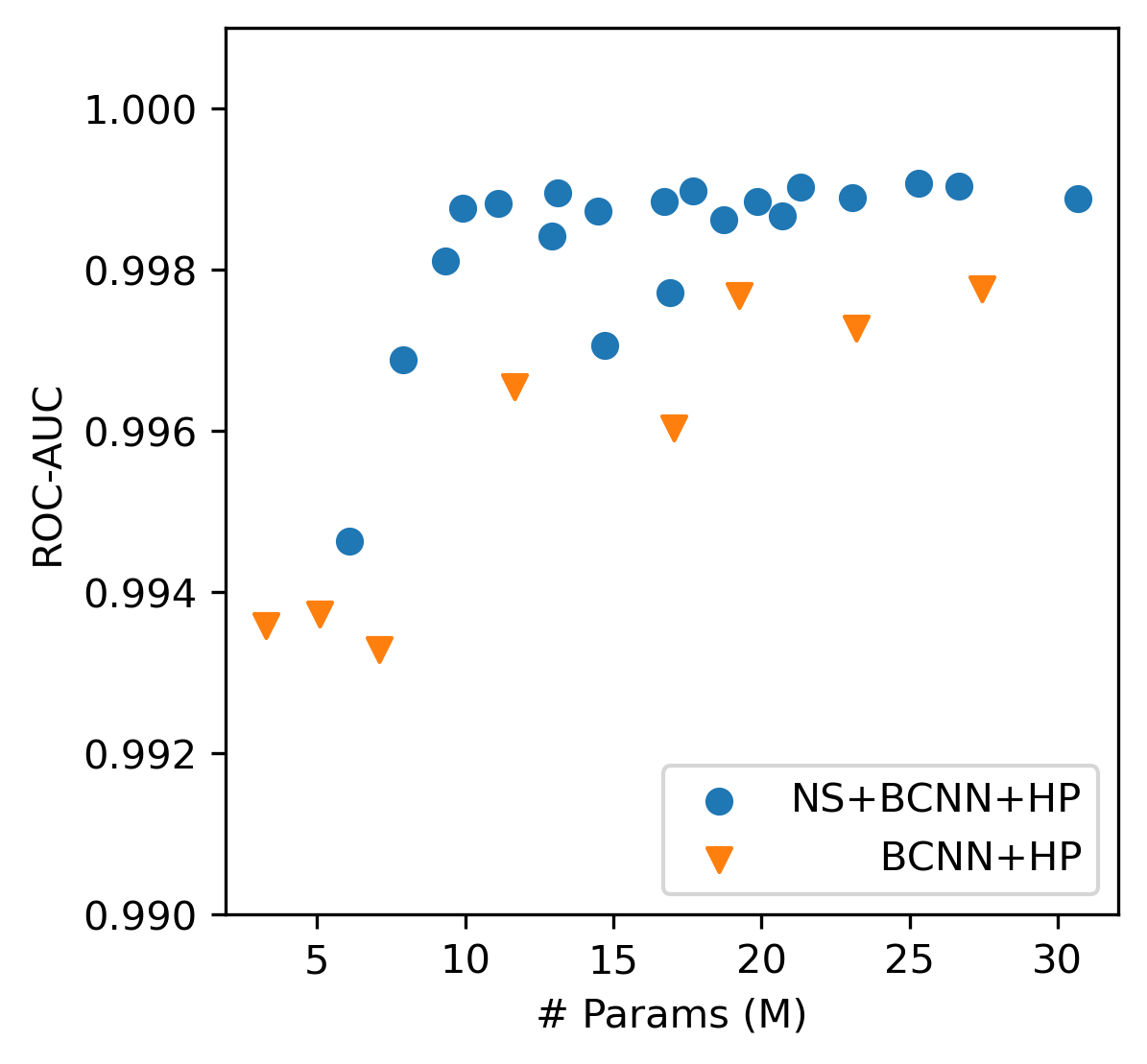}
        \end{minipage}
    }%
    \subfigure[Open 2-3: Known devices/device aging.]{
        \centering
        \begin{minipage}[t]{0.32\linewidth}
        \centering
        \includegraphics[width=\linewidth]{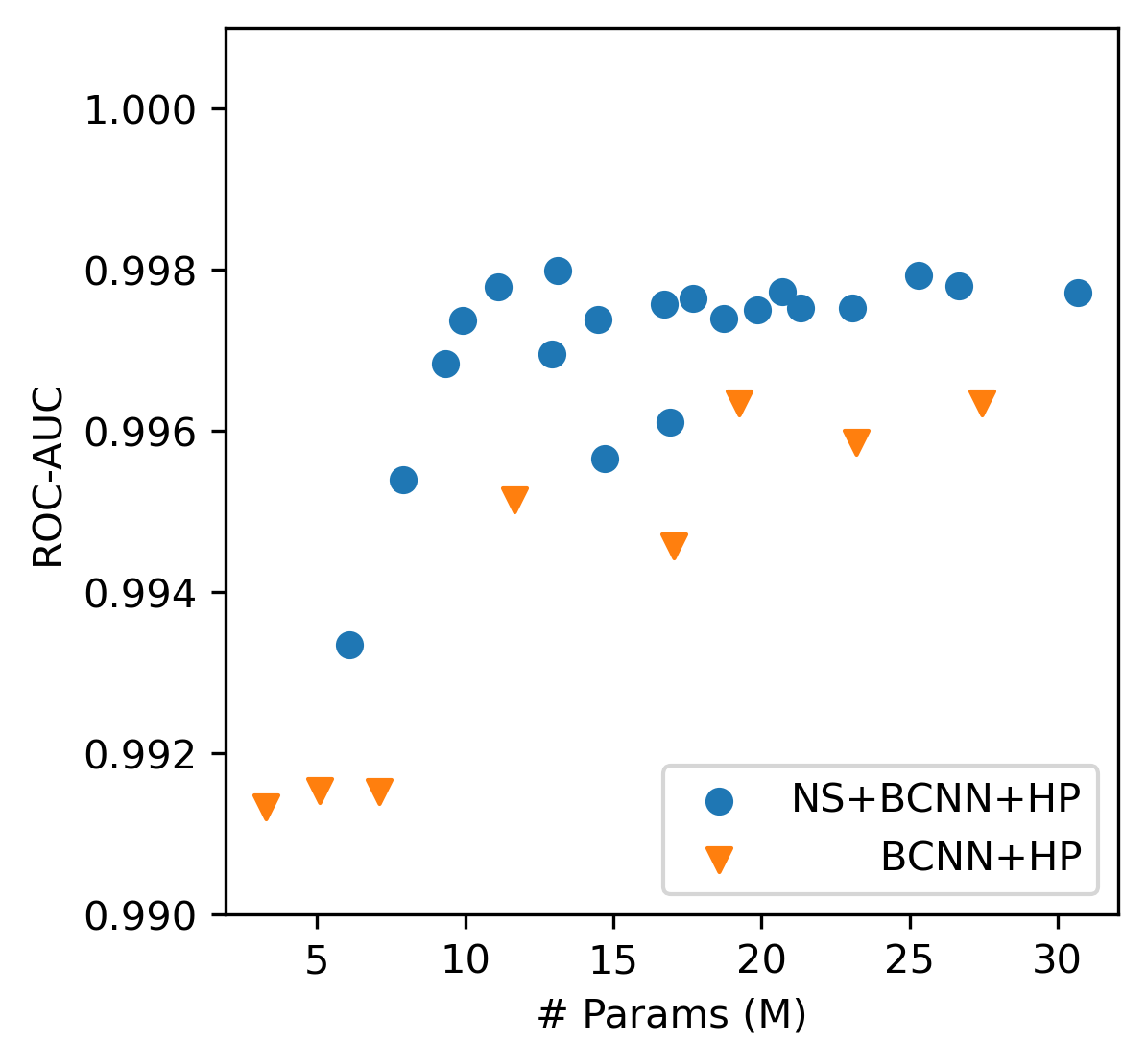}
        \end{minipage}
    }\\
    \subfigure[Open 1: Unknown devices/no device aging.]{
        \centering
        \begin{minipage}[t]{0.32\linewidth}
        \centering
        \includegraphics[width=\linewidth]{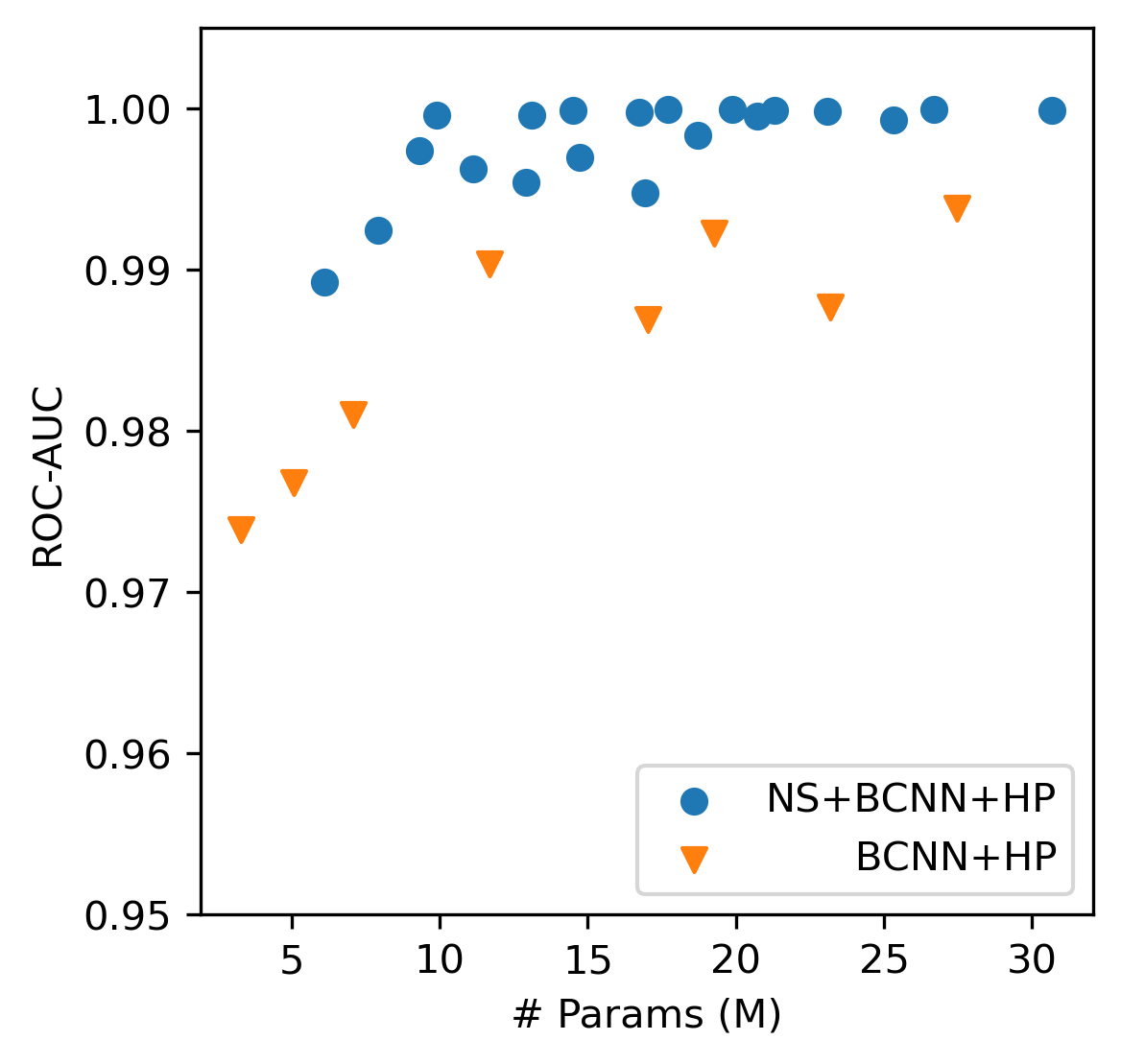}
        \end{minipage}
    }%
    \subfigure[Open 4: Unknown devices/device aging.]{
        \centering
        \begin{minipage}[t]{0.32\linewidth}
        \centering
        \includegraphics[width=\linewidth]{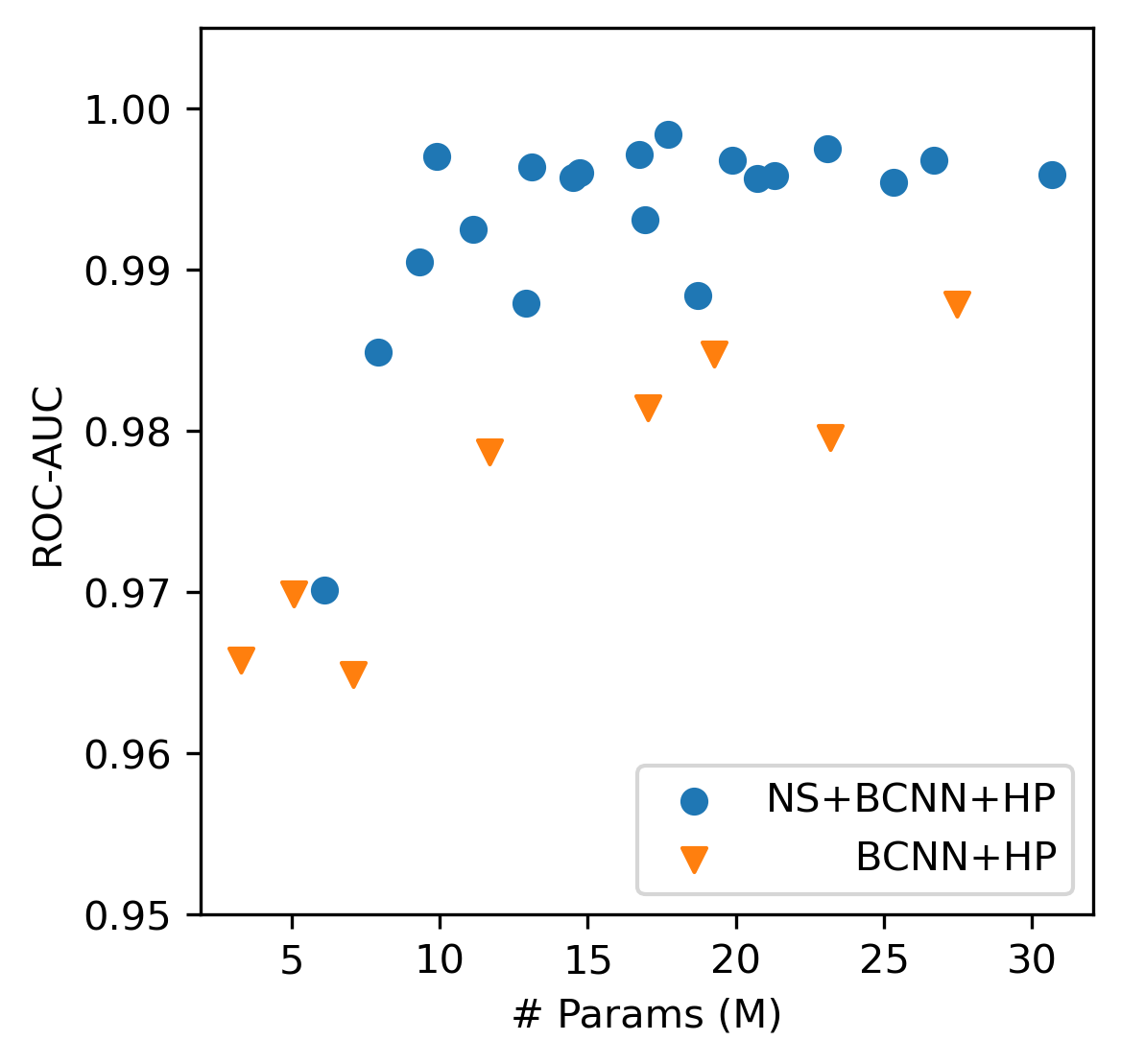}
        \end{minipage}
    }%
    \subfigure[Open 4-5: Unknown devices/device aging.]{
        \centering
        \begin{minipage}[t]{0.32\linewidth}
        \centering
        \includegraphics[width=\linewidth]{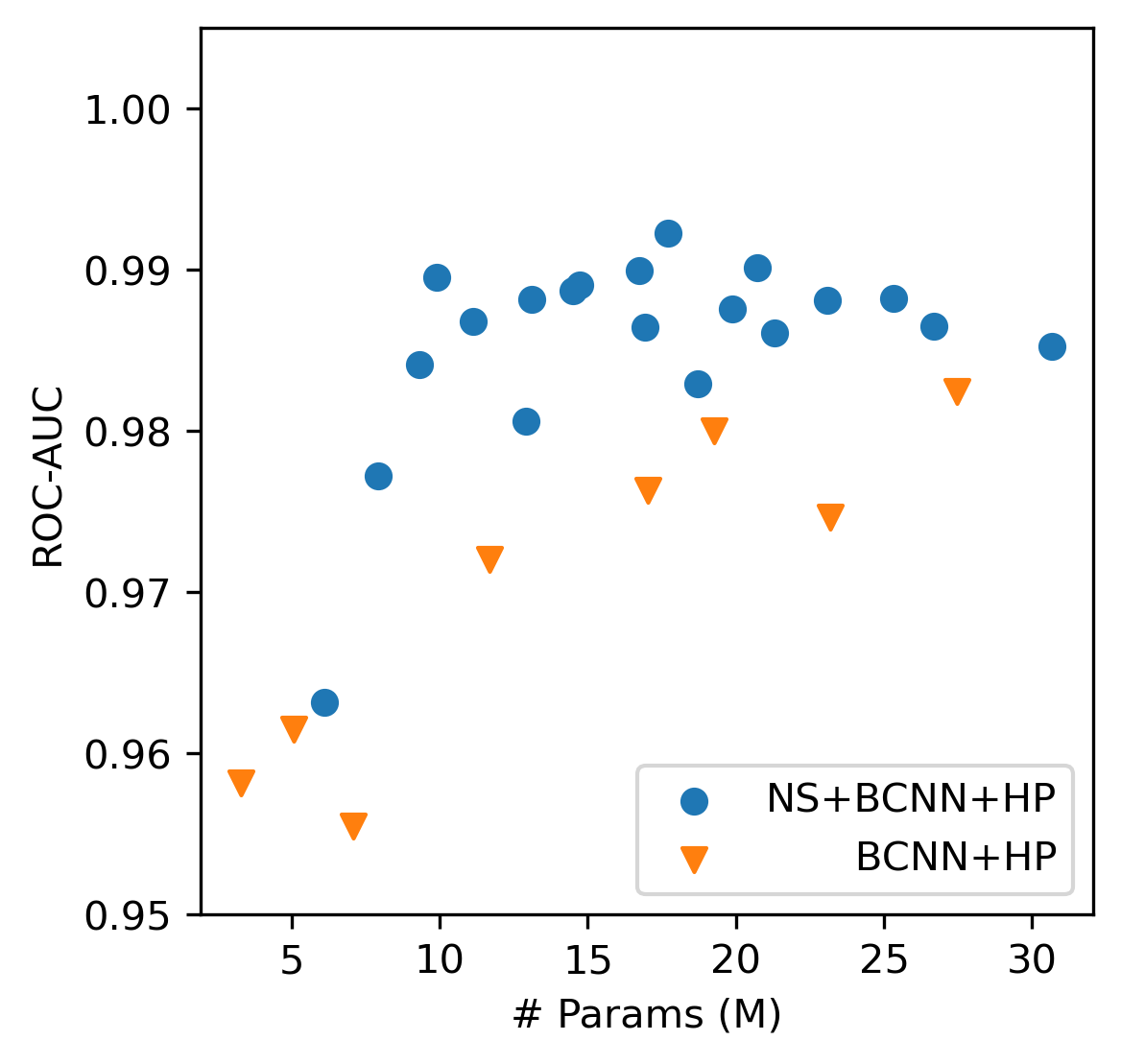}
        \end{minipage}
    }\\
	\caption{Scatter plots of \#Param-AUC under closed and open test sets.}
	\label{fig:exp2}
\end{figure*}

\begin{table}[t]  

\caption{Complexity Setting of Network Architectures}

\centering
\begin{tabular}{r|c|c}  
\toprule   
 Types & $L_{\text{NS}}$ & $L_{\text{RFF}}$\\  
\midrule
\multirow{2}*{BCNN+HP} & \multirow{2}*{N/A} & 8, 12, 16, 24, 32, \\
~ & ~ & 35, 40, 45\\
\midrule
NS+BCNN+HP & 4, 8, 12, 16 & 8, 12, 16, 24, 32\\
\bottomrule  

\end{tabular}
\label{tb:params}
\end{table}


\subsection{Comparison of Varied Complexity}
In this subsection, we conduct a final experiment that studies the effects of the model complexity and the effectiveness of NS. We construct the model architectures by traversing the combination of values for $L_{\text{NS}}$ and $L_{\text{RFF}}$ in Table~\ref{tb:params}. 
\bflag{Here, we use $L_{\text{NS}}$ and $L_{\text{RFF}}$ to control the number of filters of the neural networks in the NS and the RFF extractor, respectively.}

The resulting scatter plots are shown in Fig.~\ref{fig:exp2}. The blue circles correspond to the proposed NS-based RFF with different levels of complexity, while the orange triangles correspond to the purely DL-based RFFs.

As seen in Fig.~\ref{fig:exp2}, for each test set, the typical BCNN without NS must become more complex to achieve satisfactory performance, while the convergence of the NS-based methods can easily achieve with only 9.9 \bflag{M} parameters~($L_{\text{NS}}=4$, $L_{\text{RFF}}=16$). 

We argue that this is because of the signal preprocessing priors to synchronization. The synchronization preprocessing restricts the neural network's model space while significantly reducing the difficulty of the learning task. This is also the reason why the proposed NS-based RFF has better performance in more general situations.

\begin{figure*}[t]
	\centering
    \subfigure[$\omega_{\text{TS}}$, and $\phi_{\text{TS}}$ from TS.]{
        \centering
        \begin{minipage}[t]{0.48\linewidth}
        \centering
        \includegraphics[width=\linewidth]{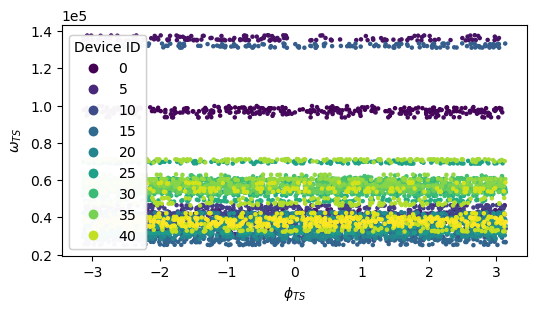}
        \end{minipage}
    }%
    \subfigure[$\omega_{\text{NS}}$, and $\phi_{\text{NS}}$ from NS.]{
        \centering
        \begin{minipage}[t]{0.48\linewidth}
        \centering
        \includegraphics[width=\linewidth]{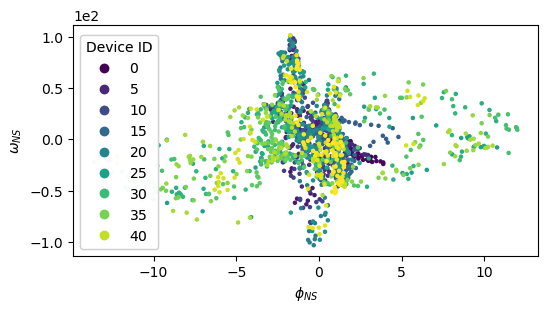}
        \end{minipage}
    }
	\caption{\bflag{Visualization of the frequency and phase offsets.}}
	\label{fig:visoffset}
\end{figure*}

\begin{table}[htbp]  

\caption{Close-set classification performance of RFF/$\omega$/$\phi$}

\centering
\begin{threeparttable}[b]
\begin{tabular}{lrr}  

\toprule   
\multirow{2}*{Training data} &\multicolumn{2}{c}{ACC}\\
		~ & Close test set & Open 2-3 \\

\midrule

$\omega_{\text{TS}}$, $\phi_{\text{TS}}$ & 20.4433\% & 14.1125\% \\ 
$\mathbf{z}_{\text{TS}}$ & 65.0246\% & 34.3815\% \\
$\mathbf{z}_{\text{TS}} + (\omega_{\text{TS}},\phi_{\text{TS}})$ & 70.6897\% & 39.4393\% \\
\midrule
$\mathbf{z}_{\text{NS}}$\tnote{\dag} & {\bf 98.7684\%} & {\bf 63.9704\%} \\ 
\bottomrule  

\end{tabular}
\begin{tablenotes}
     \item[\dag] Proposed NS in this paper.
\end{tablenotes}
\end{threeparttable}
\label{tb:clf}
\end{table}

\subsection{Performance of TS with Offsets as Additional Features}
\bflag{One may wonder whether the gain of the proposed NS-based RFF is just due to the gain derived from considering the frequency and phase offset in TS. To answer this question, we train a series of classifiers by taking corresponding sets as the training and evaluation data set. They include the following four different setups:}
\begin{itemize}
	\item \bflag{($\omega_{\text{TS}}$, $\phi_{\text{TS}}$): A classifier using only the frequency and phase offsets extracted by TS;}
	\item \bflag{$\mathbf{z}_{\text{TS}}$: A classifier using only RFFs extracted from the signal compensated by TS;}
	\item \bflag{$\mathbf{z}_{\text{TS}}$ + ($\omega_{\text{TS}}$, $\phi_{\text{TS}}$): A classifier using both the TS-based RFFs and the TS-based offsets;}
	\item \bflag{$\mathbf{z}_{\text{NS}}$: A classifier using RFFs extracted from the signal compensated by the proposed NS.}
\end{itemize}

\bflag{
Overall, we found that the performance of TS-based methods are not comparable with that of the NS-based method, even if we include offsets as additional features in these methods. In particular, the performance gain achieved by using ($\omega_{\text{TS}}$ , $\phi_{\text{TS}}$) is minor, only around $5\%$. In comparison, the performance gain by the proposed NS is significantly higher, around $28\%$. This indicates that ($\omega_{\text{TS}}$, $\phi_{\text{TS}}$) contains little additional information about the device identity, and the proposed NS does a much better job in preserving device-related information. 
}

\bflag{
One of the underlying reasons why ($\omega_{\text{TS}}$, $\phi_{\text{TS}}$) are less informative about the device identity is the imprecise estimation of $\omega_{\text{TS}}$ and $\phi_{\text{TS}}$. Note that TS estimates the frequency and phase offsets by comparing the received signal with an ideal signal. This means that the noise corrupting the received signal will also affect the precision of TS. When synchronizing the received signal by such imprecise $\omega_{\text{TS}}$ and $\phi_{\text{TS}}$, estimation errors are introduced in the processed signal, and this results in the loss of device-relevant information.
}

\section{Conclusion}
This paper has proposed a new model-and-data-driven framework for open-set PLA. Traditional preprocessing techniques like TS have been widely used for RFF extraction.  However, such preprocessing may cause a loss of information about the device identity, according to the data processing inequality. Based on this observation, in the proposed framework, we use a  ``neural''  generalization of the carrier synchronization as a preprocessing module, referred to as NS.  This module serves as an essential part of the proposed end-to-end deep learning framework for introducing the inductive bias from signal processing models. We also proposed a hyperspherical representation to further improve the quality of the RFF identification. Experimental results show that TS-based methods tend to extract the weak features, i.e., channel distinction, rather than device-inherent features.  On the other hand,  the proposed NS module and the hypersphere representation with the proposed end-to-end training framework can guarantee the least information corruption and reduce the difficulty of the RFF learning task. The resulting RFF can not only perform well in known devices but can also be generalized to unknown devices and unknown channels.

\bflag{Some challenging tasks remain to be solved in future work:
 1) 
 For deployment in real-world scenarios, the complexity of the RFF extractor can be further reduced by the recent model compression techniques~\cite{liu2018rethinking, tan2019efficientnet};
 2) the proposed scheme is only developed for single-input single-output (SISO) system. How to devise a multiple-input multiple-output (MIMO) version of NS is another interesting and challenging research topic.}
 


%

\appendices

\section{Proof of Theorem 1}
Given the Markov chain in \eqref{eq:markov2}, let us begin with $I(\vecz; \vecy)$: 
\begin{equation}
\begin{aligned}
I(\vecz; \vecy) &= \iint  p(\vecy, \vecz)  \ln \frac{p(\vecy, \vecz)}{p(\vecy)p(\vecz)} \dif \vecy \dif \vecz\\
                &= \iint p(\vecy, \vecz)  \ln \frac{p(\vecy| \vecz)}{p(\vecy)} \dif \vecy \dif \vecz\\
                &= \iint p(\vecy, \vecz)  \ln {p(\vecy| \vecz)} \dif \vecy \dif \vecz + \mathcal{H}(\vecy),
\end{aligned}
\label{eq:proof1}
\end{equation}
where 
\[
\mathcal{H}{(\vecy)}=\iint p(\vecy, \vecz)  \ln {p(\vecy)} \dif \vecy \dif \vecz=\int p(\vecy)  \ln {p(\vecy)} \dif \vecy
\]
denotes the entropy of $\vecy$, which is a constant and does not affect the optimization. The density $p(\vecy| \vecz)$ in (\ref{eq:proof1}) is fully defined by the proposed RFF extractor $p(\vecz| \vecr)$ and the given Markov chain as
\begin{equation}
\begin{aligned}
p(\vecy| \vecz) = \frac{p(\vecz , \vecy)}{p(\vecz)}
                = \int \frac{p(\vecz | \vecr)p(\vecy |\vecr)p(\vecr)}{p(\vecz)} \dif \vecr.
\end{aligned}
\label{eq:proof2}
\end{equation}
However, computing (\ref{eq:proof2}) is intractable. In order to accurately estimate $I(\vecz; \vecy)$, we use an auxiliary classifier $q(\vecy| \vecz)$ as a variational approximation of $p(\vecy| \vecz)$. Given the Kullback-Leibler divergence between $p(\vecy| \vecz)$ and $q(\vecy| \vecz)$, it follows that
\begin{equation}
\begin{aligned}
  	\mathcal{D}_{\text{KL}}[p(\vecy|\vecz)\parallel q(\vecy|\vecz)] = \int & p(\vecy|\vecz)  \ln \frac{p(\vecy|\vecz)}{q(\vecy|\vecz)} \dif \vecy \ge 0\\ 
  	\Rightarrow \int   p(\vecy| \vecz)  \ln p(\vecy| \vecz) \dif \vecy &\ge \int p(\vecy| \vecz)  \ln q(\vecy| \vecz) \dif \vecy.
\end{aligned}
\label{eq:proof3}
\end{equation}
By substituting (\ref{eq:proof3}) into (\ref{eq:proof1}), we have 
\begin{equation}
\begin{aligned}
I(\vecz; \vecy) &\ge \iint p(\vecz)p(\vecy| \vecz)  \ln {q(\vecy| \vecz)} \dif \vecy \dif \vecz + \mathcal{H}(\vecy)\\
&= \iiint p(\vecr) p(\vecy| \vecr) p(\vecz | \vecr)  \ln q(\vecy | \vecz)\dif \vecr \dif \vecy \dif \vecz + \mathcal{H}(\vecy)\\
&= \iiint  p(\vecy, \vecr) p(\vecz | \vecr)  \ln q(\vecy | \vecz)\dif \vecr \dif \vecy \dif \vecz + \mathcal{H}(\vecy).
\end{aligned}
\label{eq:proof4}
\end{equation}

The second equality follows by introducing the variable $\vecr$, which is the received signal. From (\ref{eq:proof3}), it is readily to seen that the equality in (\ref{eq:proof4}) holds if and only if $\mathcal{D}_{\text{KL}}[p(\vecy|\vecz)\parallel q(\vecy|\vecz)]=0$ which implies $q(\vecy|\vecz)=p(\vecy|\vecz)$.

%



\ifCLASSOPTIONcaptionsoff
  \newpage
\fi





%



\end{document}